\documentclass[12pt]{article}
\usepackage{graphicx} 
\usepackage{xcolor}  
\usepackage{amsfonts}
\usepackage{bbm}
\usepackage{bm}
\usepackage{mathtools}
\usepackage{subcaption}
\usepackage{amsmath}
\usepackage[colorlinks,urlcolor=blue,linkcolor=blue,citecolor=blue]{hyperref}
\usepackage{booktabs} 
\usepackage{amsthm}
\usepackage{amssymb}
\usepackage{array}
\usepackage{algorithm}
\usepackage{algpseudocode} 
\usepackage[margin=1in]{geometry} 
\usepackage{authblk} 
\newtheorem{proposition}{Proposition}
\theoremstyle{definition} 
\newtheorem{definition}{Definition}

\DeclareMathOperator*{\argmax}{arg\,max}
\DeclareMathOperator*{\argmin}{arg\,min}

\usepackage{natbib}
\title{Goal Exploration via Adaptive Skill Distribution for Goal-Conditioned Reinforcement Learning}
\author[1]{Lisheng Wu\thanks{Email: \href{mailto:lisheng.wu@manchester.ac.uk}{lisheng.wu@manchester.ac.uk}}}
\author[1]{Ke Chen\thanks{Email: \href{mailto:ke.chen@manchester.ac.uk}{ke.chen@manchester.ac.uk}}}
\affil[1]{Department of Computer Science, University of Manchester, Manchester, M13 9PL, UK}
\date{}

\begin{document}

\maketitle

\begin{abstract}
Exploration efficiency poses a significant challenge in goal-conditioned reinforcement learning (GCRL) tasks, particularly those with long horizons and sparse rewards. A primary limitation to exploration efficiency is the agent's inability to leverage environmental structural patterns. In this study, we introduce a novel framework, GEASD, designed to capture these patterns through an adaptive skill distribution during the learning process. This distribution optimizes the local entropy of achieved goals within a contextual horizon, enhancing goal-spreading behaviors and facilitating deep exploration in states containing familiar structural patterns. Our experiments reveal marked improvements in exploration efficiency using the adaptive skill distribution compared to a uniform skill distribution. Additionally, the learned skill distribution demonstrates robust generalization capabilities, achieving substantial exploration progress in unseen tasks containing similar local structures.
\end{abstract}

\section{Introduction}
Goal-conditioned reinforcement learning (GCRL) equips agents with the capability to tackle a multitude of tasks, each delineated by distinct goals. Within the GCRL paradigm, environments often present sparse-reward configurations stemming from the non-trivial intricacies inherent in reward engineering. Such settings pose a significant challenge to efficient exploration, particularly in the pursuit of desired goals within long-horizon tasks.

In response to the challenges of efficient exploration in GCRL, several strategies have emerged. These include setting goals that maximize the entropy of achieved outcomes \citep{pong2019skew,pitis2020maximum}, employing world models for the discovery of novel goals \citep{mendonca2021discovering,hu2023planning}, and orchestrating exploration through intermediate sub-goals \citep{hoang2021successor}. However, their effectiveness in acquiring new goals hinges upon random action noise and specialized target policies, such as those conditioned on selected sub-goals \citep{pong2019skew,pitis2020maximum}. Notably, these strategies tend to confine agents within restrictive exploratory areas, sacrificing the opportunity to explore more new goals.

Instead of relying on primitive actions for exploration, skills \citep{florensa2017stochastic,eysenbach2018diversity,campos2020explore,gehring2021hierarchical} encompass a range of effective behaviors designed to target specific objectives systematically over multiple steps, enabling the agent to explore beyond restrictive areas  \citep{wu2024goal,xu2022aspire,hao2023skill,pertsch2021accelerating}. However, the efficacy of each skill is typically confined to a specific subset of the state space and their application outside these subsets often leads to less impactful and even negligible changes in the environment. To evaluate the efficacy of skills, the Value Function Space (VFS) \citep{shah2021value} employs skill value functions collectively as a generalizable representation. Concurrently, Skill Prior Reinforcement Learning (SPiRL) focuses on learning the distribution of skill priors to effectively prioritize among skills in each scenario. Nonetheless, both approaches encounter adaptability challenges in unseen states and often overlook the implications of skill selections on the exploration objective, a crucial aspect of deep exploration \citep{osband2016deep,osband2019deep}.

In addressing the limitations of existing exploration methods, we introduce Goal Exploration via Adaptive Skill Distribution (GEASD), a novel approach that fosters unsupervised adaptation of skill distribution and in-depth exploration. The objective of our skill distribution is to optimize the lower bound of the global exploration objective by focusing on the local entropy of achieved goals within a historical contextual horizon. As shown in Fig. \ref{fig:intuitive_vis}, we aim to leverage the structural information of the historical context to guide evolution into future scenarios. Motivated by VFS \citep{shah2021value}, GEASD features a structural representation based on skill value functions. This representation is dynamically refined through intrinsic rewards that quantify local entropy variations, effectively capturing the evolving dynamics of historical contexts. Following this, we adopt an adaptive skill distribution in the form of a Boltzmann distribution, derived directly from the skill value functions. This skill distribution accounts for the local entropy in future scenarios, thereby facilitating in-depth exploration.

\begin{figure}
    \centering
    \includegraphics[width=\textwidth]{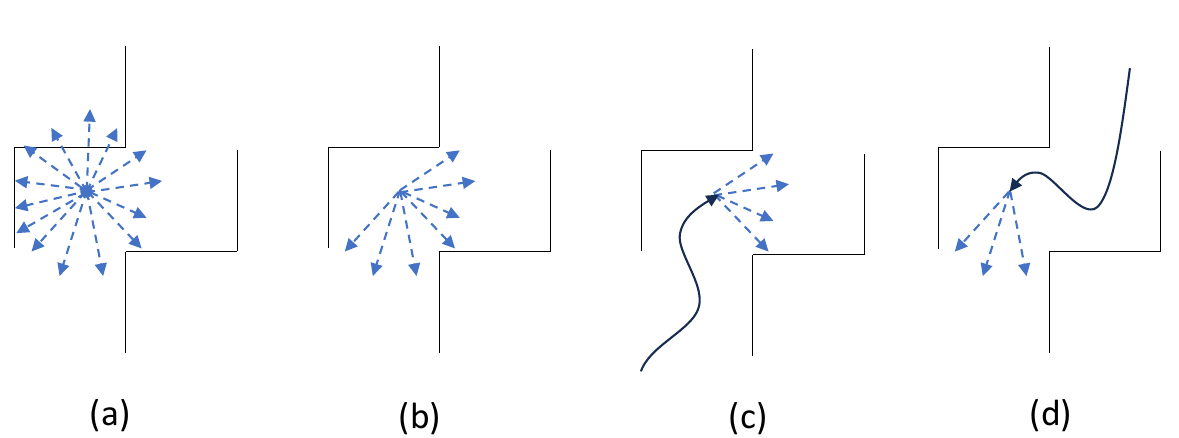}
    \caption{Visualizing decision-making in a maze grid: Dashed arrows indicate potential future actions, while solid arrows trace the agent's historical trajectory. In (a), the agent considers all potential actions equally within its action space, not yet incorporating information from the environmental structure. In (b), it strategically exploits the local environmental structure to selectively focus on actions that would broaden the spread of achieved goals. In (c), the agent's previously established trajectory makes actions leading downwards appear less favorable. In (d), the established leftward trajectory implies that actions moving to the right may be disadvantageous.}
    \label{fig:intuitive_vis}
\end{figure}

In summary, our work makes the following key contributions:
i) We introduce an innovative framework GEASD within a non-Markovian exploration process.
ii) We develop novel intrinsic rewards to quantify local entropy changes and an adaptive skill distribution, underpinned by skill value functions, to facilitate deeper exploration. iii) We provide a theoretical justification for the entropy optimization of our skill distribution, contingent upon certain skill-related assumptions.
iv) Through comparative benchmarks, we demonstrate the superior deep exploration efficiency and generalization capabilities exhibited by our approach relative to state-of-the-art methods.

\section{Related Work}
\paragraph{Exploration via Intrinsic Rewards} In RL, intrinsic rewards are pivotal for guiding agents through environments characterized by sparse rewards and extensive horizons, often linked to the novelty of states or dynamics. This novelty is broadly classified into two categories: inter-episode and intra-episode novelty, each emphasizing the agent's discovery of new goals across and within episodes, respectively.
Inter-episode novelty is commonly associated with significant prediction errors \citep{stadie2015incentivizing,pathak2017curiosity,kim2018emi,bai2021variational}, low state visitation frequencies \citep{tang2017exploration,bellemare2016unifying,ostrovski2017count,machado2020count,choshen2018dora}, and substantial model disagreements \citep{burda2018exploration,pathak2019self,shyam2019model}. These intrinsic rewards direct the agent towards novel states, enhancing the likelihood of encountering previously unseen states.
To foster intra-episode novelty, some strategies calculate intrinsic rewards based on the distance from states previously visited within the same episode \citep{savinov2018episodic,badia2020never}. Meanwhile, other approaches ensure these rewards are issued only once for each distinct state or area, aiming to minimize repetitive exploration \citep{stanton2018deep,zhang2021noveld}. Employing intrinsic rewards based on intra-episode novelty helps circumvent the problem of frequently revisiting identical states during exploration, thereby encouraging a wider range of exploratory behaviors.
However, methods from both categories primarily rely on random actions, the introduction of noise into target policies, or a combination of both to uncover new states, which is inefficient. This strategy often results in exploration being concentrated around already known states that are deemed novel due to their high intrinsic rewards, consequently limiting the discovery of unseen states and resulting in shallow exploration
Our approach to intrinsic rewards, while also aimed at enhancing intra-episode novelty, diverges in its ultimate goal. We aim to develop an exploration policy that efficiently navigates novel scenarios with a structure familiar to the agent. This ambition moves us beyond the conventional scope of simply optimizing known scenarios with after-the-fact intrinsic rewards, enabling the utilization of structural information to perform deep exploration.

\paragraph{Exploration in GCRL}
In GCRL, agents begin without a clear understanding of how to reach their goals, making efficient exploration strategies essential for discovering and achieving these targets. Agents enhance their exploration by targeting various sub-goals in addition to the desired goal. Moreover, achieving a sub-goal provides the agent with intrinsic rewards, shifting the focus from rewarding novel states to setting strategic sub-goals.
Several methods have been developed to facilitate this exploration process: Some create goals of intermediate difficulty to avoid overly simple or complex tasks \citep{florensa2018automatic,campero2020learning}, others sample goals from a distribution uniformly spread across previously achieved goals to promote balanced exploration \citep{pong2019skew}, and yet others prioritize exploration of goals with the lowest density or visit counts \citep{pitis2020maximum,hoang2021successor}.
To extend the exploration beyond the neighboring areas along the trajectories towards the sub-goals, some works extensively perform goal-independent exploration after reaching these sub-goals \citep{pitis2020maximum,hoang2021successor}.
Alternatively, other works establish specific goal-independent policies for exploration and incorporate world models to enhance sample efficiency through training with model rollouts \citep{mendonca2021discovering,hu2023planning}.
However, despite their innovative approaches, these methods encounter challenges similar to those faced by intrinsic reward-based methods, particularly in their ability to effectively explore unseen goals. Although PEG \citep{hu2023planning} utilizes world model rollouts to identify sub-goals with elevated discovery potential, the model's imprecision in capturing the dynamics of novel areas restricts the agent's ability to explore extensively.
To overcome these limitations, GEAPS \citep{wu2024goal} integrates pre-trained skills \citep{florensa2017stochastic,eysenbach2018diversity} as effective behavioral patterns to augment exploration. This integration prevents agents from stagnating in specific states, thereby improving the efficiency of discovering new states. However, the efficacy of these skills may be limited to certain contexts, and GEAPS's uniform skill distribution approach might reduce their effectiveness in enhancing exploration efficiency. In contrast, our method introduces a learnable skill distribution, specifically designed to facilitate more efficient navigation through the environment by the agent.

\paragraph{Exploration via Skills}
Skills, as an extended form of action, are often adopted in hierarchical reinforcement learning (HRL). As the efficacy of skills is often constrained to specific states, more efficient exploration demands the capability to evaluate their efficacies. VFS \citep{shah2021value} can abstract the state via skill value functions, which enables the agent to directly evaluate the efficacy of each skill in each scenario. A learned policy based on the VFS representation can generalize to novel scenarios with similar representations, even facilitating zero-shot generalization. SPiRL \citep{pertsch2021accelerating} and ASPiRe \citep{xu2022aspire} acquire skill distribution priors conditioned on states from pre-training datasets, which can indicate the relative efficacy of skills in each state. However, these methods require pre-training tasks or offline datasets that mirror the structures of the current task to learn accurate skill value functions or establish effective skill priors. Otherwise, they can even adversely affect exploration efficiency for those unfamiliar states, especially in sparse-reward settings where they almost have no chance to correct the behaviors before the achievement of the desired goal. Moreover, they do not account for the impacts of skills on the exploration objective, thereby limiting the depth of exploration \citep{osband2016deep, osband2019deep}. For example, some skills may lead to the revisitation of previously explored states, rendering the previous efforts almost in vain. In contrast, our skill distribution supports unsupervised adaptation to the current task and accounts for the impacts of skill on the local entropy within a historical context to promote deep exploration. In our framework, we utilize skill value functions as structural representations, akin to the VFS approach. However, we distinctively leverage these skill values to capture expected local entropy changes and directly influence the skill distribution, optimizing exploration without necessitating further policy learning. Furthermore, our integration of skill value functions with skill distribution significantly reduces the risk of catastrophic failures often associated with policy relearning during adaptation. When applying skills within GCRL, we utilize skills in a coarse-to-fine manner, as exemplified by GEAPS \citep{wu2024goal}. This method not only facilitates the effective discovery of novel goals but also supports the learning of fine-grained, goal-conditioned policies from low-level data.

\section{Preliminary}
GCRL is framed as a goal-augmented Markov Decision Process (GAMDP) characterized by parameters 
$(\mathcal{S}, \mathcal{A}, \mathcal{T}, r, \mathcal{G}, p_{d\!g}, \phi, \gamma, K)$. Here, $\mathcal{S}$, $\mathcal{A}$, $\gamma$, and $K$ are state space, action space, discount factor, and horizon, respectively. The transition function \(\mathcal{T}(s'|s,a)\) gives the probability of transitioning from state \(s\) to \(s'\) via action \(a\). The goal space is $\mathcal{G}$, with $p_{d\!g}$ as the desired goal distribution. The function $\phi(s)$ maps state $s$ to its achieved goal.
In sparse reward settings, the reward \(r(s', g)\) is binary, being zero if the distance \(d(\phi(s'), g)\) between achieved and target goals is less than \(\epsilon\) and \(-1\) otherwise. GCRL's objective is to learn a goal-conditioned policy $\pi(s, g)$ maximizing:
\begin{equation}
    J(\pi) = \mathbb{E}_{g\sim p_{d\!g}, a_t \sim \pi(s_t, g), s_{t+1}\sim \mathcal{T}(\cdot|s_t, a_t)}\left[\sum_{t=0}^{K-1} \gamma^t r(s_{t+1}, g)\right].  \nonumber
\end{equation}
We represent the state at timestep \(t\) as \(s_t\) and define the historical trajectory up to this point as \(h_t = (s_0, a_0, \ldots, s_{t-1}, a_{t-1}, s_t)\), which encompasses the sequence of states and actions. A historical context \(h^C_t\), spanning \(t-C\) to \(t\) for context horizon \(C\), is represented by \(h^C_t = (s_{t-C+1}, a_{t-C+1}, \ldots, s_{t-1}, a_{t-1}, s_{t})\). For \(C > K\), \(h^C_t\) is equivalent to \(h_t\). Additionally, we introduce the mapping function \(\bm{\Phi}\), which transforms \(h^C_t\) into a sequence of achieved goa2ls, denoted by \(\bm{\Phi}(h^C_t) = (\phi(s_{t-C+1}), \phi(s_{t-C+2}), \ldots, \phi(s_t))\).

\section{Methodology}

\subsection{Overview}
Entropy maximization is a widely adopted strategy for exploration in RL. Existing methods, while effective at guiding agents to explore novel scenarios through intrinsic rewards, typically fall short of extending exploration to unseen areas in-depth. This limitation largely stems from their reliance on random action noises or specific target policies, which lack awareness of the structural information in novel states. Our approach seeks to overcome this by harnessing local structural information, guiding the agent to undertake deep explorations into these previously unexplored areas, thus significantly expanding the exploration frontier.

To achieve this target, our exploration policy concentrates on optimizing the local entropy of achieved goals within a specified historical context. This strategic optimization enhances the lower bound of the overall entropy of achieved goals. To optimize local entropy effectively, the exploration policy would learn to utilize structural information from the historical context, generating actions that result in a broader diversity of achieved goals. Moreover, when faced with novel scenarios that present structural information familiar to the agent, it can still devise actions that extend achieved goals into unexplored areas, even in the absence of precise dynamics. All technical issues stated above are addressed in Section~\ref{sect:exploration_obj}.

Central to our method is \textit{Skill-based Local Entropy-Maximization Pattern} (SLEMP), an ideal behavior distribution of a predefined set of skills that maximizes the local entropy with those skills. SLEMP adapts as the historical context evolves, promoting a flexible approach to exploring various scenarios. All technical issues stated above are resolved in Section~\ref{sect:sturctural_pattern}.

To accurately capture and utilize the structural information from the historical context, we leverage skill value functions as our structural representation. We establish novel intrinsic rewards to reflect the local entropy changes during the historical context's evolution and learn the skill value functions based on those intrinsic rewards. Therefore, this structural representation can reflect their relative capabilities in optimizing local entropy, which can be seen as their capabilities to spread out the achieved goal in the current scenario. The details of our novel structural representation is described in Section~\ref{sect:structural_representation}.

Building upon these skill value functions, we propose employing a skill distribution modeled after the Boltzmann distribution. We adopt a dynamic temperature for the agent to flexibly prioritize deeper exploration with the skill of highest value when the local entropy is high, while seeking more options when the local entropy is low. The resulting adaptive skill distribution can assist our agents in navigating through the environment more efficiently. The technical details on the adaptive skill distribution can be found in Section~\ref{sect:skill_dist_derivation}.

Lastly, we introduce the \textit{General Exploration via Adaptive Skill Distribution} (GEASD) framework as presented in Section~\ref{sect:geasd}. GEASD is designed to integrate seamlessly with existing GCRL exploration methodologies, coupled with our innovative skill distribution approach. This integration enables a deep and efficient exploration policy to explore the desired goals more efficiently.

\subsection{Exploration Objective}
\label{sect:exploration_obj}
We formulate the objective for the exploration policy to be the local entropy of achieved goals as $H(\bm{\Phi}(h^C_t))$. 
The overall entropy of all achieved goals until $t$th step of the current episode takes all experiences into account, including those collected in previous episodes, is denoted as $H_{\text{all}}(\mathcal{G})$. To analyze the influence of the local entropy, we divide the overall experiences until the $t$th step into two parts at the $(t-C)$th step, resulting in the entropy decomposition into the entropy of all achieved goals until $(t-C)$ step of the current episode, denoted as $H_{\text{past}}(\mathcal{G})$ and the local entropy $H(\bm{\Phi}(h^C_t))$. As demonstrated by \cite{wu2024goal}, a lower bound of the overall entropy $H_{\text{all}}(\mathcal{G})$ can be expressed as 
\begin{equation}
    H_{\text{all}}(\mathcal{G}) \geq \beta H_{\text{past}}(\mathcal{G}) + (1-\beta) H(\bm{\Phi}(h^C_{t})), \label{eq:entropy_ineq}
\end{equation}
where \( \beta = \frac{C}{|\mathcal{B}|} \) 
represents the ratio between the contextual horizon and the cumulative number of steps historically stored in the replay buffer (\( |\mathcal{B}| \)). The derivation of this inequality is based on Jensen's inequality \citep{wu2024goal} and the equality holds only when any achieved goals within the historical context $\bm{\Phi}(h^C_t)$ do not overlaps with all achieved goals prior to the $(t-C)$th step. 
By the lower bound as shown in Eq.~(\ref{eq:entropy_ineq}), optimizing our local entropy objective $H(\bm{\Phi}(h^C_t))$ can contribute to the enhancement of the overall exploration objective, which promotes the expansion of achieved goals to discover the desired goals. 

In practice, the agent cannot control the experiences that have happened and needs to make actions optimizing $H(\bm{\Phi}(h^C_t))$ before the $t$th step. In other words, the agent needs to account for the local entropy within the historical context in the future. In our work, we specifically aim to optimize  $H(\bm{\Phi}(h^C_{t+k})|h^C_t)$ $(0<k<C)$, which denotes the local entropy in $k$ step given the information contained in $h_t$. In the progressive evolution of historical contexts, properly enhancing the local entropy of the most recent context helps us achieve our goal of maximizing local entropy. During the learning, we particularly aim to leverage the model to make efficient exploration actions based on its identification of the structural information in scenarios that are even novel to the agent. 

It is worth noting that optimizing $H(\bm{\Phi}(h^C_{t+k})|h^C_t)$ is not the same as choosing a single future segment $h^{k}_{t+k}$ that leads to the maximum value. Instead, $h^{k}_{t+k}$ follows a distribution jointly decided by the transition dynamics and exploration policies. Therefore, $\bm{\Phi}(h^C_{t+k})$ represents a distribution of achieved goals with deterministic historical achieved goals $\bm{\Phi}(h^{C-k}_t)$ and stochastic future achieved goals $\bm{\Phi}(h^k_{t+k})$.

Although the study by \cite{wu2024goal} on GEAPS also focuses on the local entropy, they optimize the local entropy via \(H(\bm{\Phi}(h^k_{t+k}) | \psi(s_t))\), wherein \(\psi(s_t)\) predominantly captures the agent's internal state to aid in generalization. This approach neglects the broader context, excluding important structural details from both the historical context. In contrast, our formulation, \( H(\bm{\Phi}(h^C_{t+k})|h^C_t) \), enhances the optimization of \( H(\bm{\Phi}(h^C_t))\) by incorporating a more comprehensive set of structural information from \(h^C_t\), thereby providing a more holistic approach to understanding and interacting with the environment.

\subsection{Structural Patterns}
\label{sect:sturctural_pattern}
Consider the example shown in Fig.~\ref{fig:intuitive_vis}d: From the perspective of local entropy maximization, it tends to move in a manner that disperses the achieved goals. Simultaneously, it aims to avoid actions leading to traveling backward along the historical path or colliding with walls, which would prevent the agent from traveling further. To accomplish this, the agent must utilize structural information, including the local layout of the maze it perceives and the historical trajectories. This structural information is sufficient to indicate the potential relative changes in each behavior concerning the local entropy $H(\bm{\Phi}(h^C_t))$. We define these local entropy changes as:
\begin{definition}
\label{def:local_entropy_change}
\textbf{(Local Entropy Change)} Local entropy change quantifies the variation in local entropy within a system constrained by a sequence of states over a fixed context horizon $C$ as time $t$ progresses. The local entropy change for the $k$-step transition from $h^C_t$ to $h^C_{t+k}$ is given by: 

\begin{equation}    
H\left(\bm{\Phi}\left(h^C_{t+k}\right)|h^C_{t+k}\right) - H\left(\bm{\Phi}\left(h^C_t\right)|h^C_t\right) \nonumber
\end{equation}
\end{definition}

Recall our optimization objective in Section \ref{sect:exploration_obj}, where we aim to find the distribution of action sequences spanning $k$ steps that leads to the optimization of $H(\bm{\Phi}(h^C_{t+k})|h^C_t)$. However, considering all possible action sequences, their number grows exponentially as $|\mathcal{A}|^k$ with the planning horizon $k$, making it nearly impossible to exhaustively examine all behaviors for tasks with large action spaces and long planning horizons.
Furthermore, most behaviors have minimal impact on the environment, even without obstructing environmental structures. For instance, actions resulting in jittering around the original position do not significantly alter the environmental state. In contrast, effective behaviors, typically a small subset of all behaviors, have the potential to traverse more diverse achieved goals.
Therefore, learning the distribution of those effective behaviors can reduce computational complexity while still largely maintaining the effectiveness of optimizing our objective $H(\bm{\Phi}(h^C_{t+k})|h^C_t)$. It is worth noting that the skills are executed using a \textit{coarse-to-fine} strategy, as described in GEAPS \citep{wu2024goal}, which enables the exploration of a greater variety of goals for the goal-conditioned policy to learn from.

To capture effective behaviors in achieving distinct goals, skill learning \citep{florensa2017stochastic,eysenbach2018diversity,wu2024goal,pertsch2021accelerating} has emerged as a promising approach. In this framework, each skill aims to encapsulate a mode of effective behavior, and these skills can be either pre-trained in pre-training environments or on offline datasets, presented in either categorical or continuous form. Our discussion focuses on categorical skills. It is worth clarifying how to obtain the optimal set of skills in a general form is beyond the scope of this research.
In our work, we characterize each skill by a latent vector $\bm{z}$, which is used to sample actions according to $a_t \sim \sigma(a_t | \psi(s_t), \bm{z})$ at state $s_t$, where $\psi(s_t)$ is a feature mapping of the state $s_t$. We denote the set of skills as $\mathcal{Z}$ and set the skill horizon to be the same as the planning horizon $k$.
Furthermore, we define the distribution of skills conditioned on the historical context $h^C_t$ as $D(\bm{z}|h^C_t)$. 
Based on our objective $H(\bm{\Phi}(h^C_{t+k})|h^C_t)$, we provide the following definition of a structural pattern of $h^C_t$ from a behavioral perspective:
\begin{definition}
(\textbf{Skill-based Local Entropy-Maximization Pattern}) A Skill-based Local Entropy-Maximization Pattern (\textit{SLEMP}) for the scenario $h^C_t$ is defined using a distribution $D_{max}(\bm{z}|h^C_t)$ over a set of skills $\mathcal{Z}$ with skill horizon $k$. This distribution maximizes the local entropy $H(\bm{\Phi}(h^C_{t+k})|h^C_t)$ and is derived in detail through the following steps:
\begin{gather}
    p(h^k_{t+k}|s_t, \bm{z}) = \prod_{i=0}^{k-1} \mathcal{T}(s_{t+i+1}|s_{t+i}, a_{t+i})\sigma(a_{t+i}|\psi(s_{t+i}), \bm{z}), \nonumber \\ 
p_{D}(g | h^C_t) = \frac{\mathbb{E}_{z \sim D(\cdot | h^C_t), h^k_{t+k} \sim p(\cdot | s_t, \bm{z})} \left[ \left| \{ g' \in \bm{\Phi}(h^{C-k}_t \oplus h^k_{t+k}) : g' = g \} \right| \right]}{C} , \label{eq:skill_goal_cover} \\
    D_{max}(\bm{z}|h^C_t) = \argmax_{D(\bm{z}|h^C_t)} H_D(G|h^C_t) = \argmax_{D(\bm{z}|h^C_t)} \mathbb{E}_{g \sim p_D(g|h^C_t)} [-\log p_D(g|h^C_t)]. \nonumber
\end{gather}
where the symbol $\oplus$ in Eq.~(\ref{eq:skill_goal_cover}) denotes the concatenation of history segments, merging $h^{C-k}_t$ with $h^k_{t+k}$ into a single history segment.
\end{definition}
The SLEMP, by its nature as a pattern, can inherently recur across diverse scenarios. Therefore, an accurate prediction of SLEMPs in unseen scenarios can still assist the agent in navigating those scenarios efficiently.
For effective derivation of SLEMP, it is crucial that the agent comprehends the structural information embedded within the historical context $h^C_t$.  This necessitates a meaningful structural representation, which will be the focus of the subsequent section.

\subsection{Structural Representation via Skill Value Functions} \label{sect:structural_representation}
To comprehend the essential structural information for deducing SLEMP, the capacity to assess local entropy changes following skills is indispensable. Motivated by this understanding, we advocate for the employment of expected local entropy changes from all skills collectively to construct a meaningful structural representation of this structural information. Such expected local entropy changes can be expressed as 
\begin{equation}
    \Delta H(h^C_t, \bm{z}) = \mathbb{E}_{h^k_{t+k}\sim p(\cdot|s_t, \bm{z})}\left[H(\bm{\Phi}(h^C_{t+k})|h^C_{t+k}) - H(\bm{\Phi}(h^C_t)|h^C_t)\right] \label{eq:expected_local_entropy_changes}
\end{equation}
\begin{figure}
    \centering
    \includegraphics[width=\textwidth]{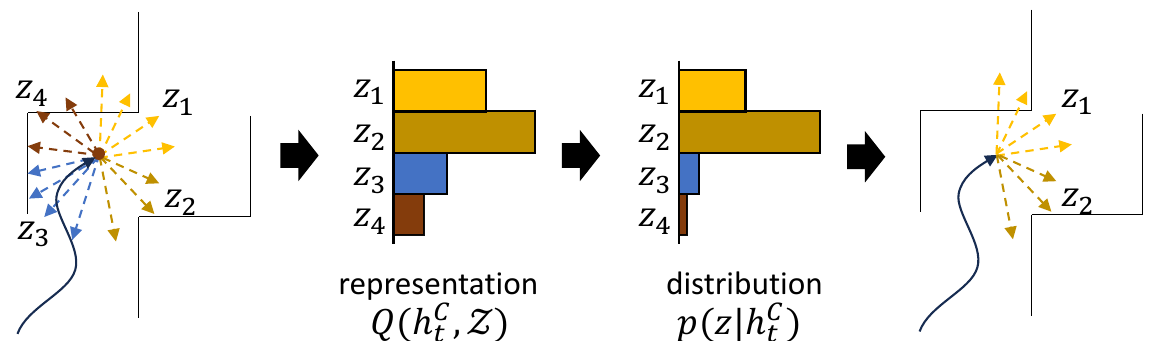}
    \caption{Illustration of Our Learning Pipeline: The leftmost figure visualizes structural information, including historical paths and local layouts, as indicated by \(h^C_t\). It also showcases potential trajectories for the four skills characterized by \(\{\bm{z}_i\}^4_{i=1}\). Following the leftmost figure, we present the structural representation \(Q(h^C_t, \mathcal{Z})\), derived from the aggregated value functions of skills. Subsequently, we derive a Boltzmann distribution \(p(\bm{z}|h^C_t)\) based on these representations. Consequently, skills characterized by lower value functions are assigned minimal probabilities, leading to their marginalization, as depicted in the rightmost figure. }
    \label{fig:learning_process}
\end{figure}

Drawing upon the VFS framework \citep{shah2021value}, which employs skill value functions collectively as state representations, we integrate this concept into our methodology. Specifically, we adapt the approach of evaluating expected local entropy changes in the form of skill value functions.  To facilitate the learning of such skill value functions, we introduce novel intrinsic rewards for the history context transition $(h^C_t, h^C_{t+1})$ as:
\begin{equation}
r_{\text{info}}(h^C_t, h^C_{t+1}) = H(\bm{\Phi}(h^C_{t+1})) - H(\bm{\Phi}(h^C_t)) \nonumber
\end{equation}
This reward function is specifically designed to encourage the achievement of distinct goals that yield high local entropy gains within the context horizon $C$. Conventionally, the high-level target value of the skill value function for the skill-level historical context transition $(h^C_t, \bm{z}_t, h^C_{t+k})$ is defined as follows:
\begin{equation}
y_{\text{high}}(h^C_t, \bm{z}_t, h^C_{t+k}|\xi(s_t)=1) =
\sum_{i=0}^{k-1} r_{\text{info}}(h^C_{t+i}, h^C_{t+i+1}),\label{eq:hierarchical_target}
\end{equation}
where $\xi(s_t) = 1$ signals the commencement of the current skill, characterized by $\bm{z}_t$, from state $s_t$. If $\xi(s_t) \neq 1$, the executed skill might vary across the subsequent skill horizon, rendering the data inappropriate for learning the skill value function for $\bm{z}_t$. $z^{\text{sample}}_{t+k}$ is sampled from the learned conditional skill distribution $p(\bm{z}|h^C_{t+k})$. However, the available data to derive the target values for each scenario are limited to historical context $h^C_t$ with $\xi(s_t)=1$, which can lead to sample inefficiency in learning skill functions.

To improve the sample efficiency, we propose a low-level version of the target value for a low-level context transition $(h^C_t, a_t, h^C_{t+1})$ as
\begin{equation}
    y_{\text{low}}(h^C_t, \bm{z}, h^C_{t+1}) = r_{\text{info}}(h^C_t, h^C_{t+1}) + \hat{\gamma} Q(h^C_{t+1}, \bm{z}). \label{eq:flat_target}
\end{equation}
where $\hat{\gamma}$ is a discount factor intentionally set to be smaller than $\gamma$, which is used in the acquisition of goal-conditioned behaviors, to prioritize short-term returns. In our work, $\gamma$ is set to $(1-\frac{1}{k})$ so that $\hat{\gamma}^{k-1} > e^{-1} \approx 0.3679$, as it is a monotonically decreasing function with the following property:
\begin{equation}
\lim_{k \to \infty} \left(1-\frac{1}{k}\right)^k = e^{-1}, \nonumber
\end{equation}
which ensures that the agent focuses on short-term returns without being overly shortsighted, allocating at least an $e^{-1}$ weight to the reward in $k$ steps. In Eq.~\eqref{eq:flat_target}, $\bm{z}$ is sampled from $p(\bm{z}|s_t, a_t)$ during the learning process as:
\begin{equation}
    p(\bm{z}|s_t, a_t) = \frac{\sigma(a_t|\psi(s_t), \bm{z})}{\sum_{\bm{z}'}\sigma(a_t|\psi(s_t), \bm{z}')}. \nonumber
\end{equation}
Different from $y_{\text{high}}$, $y_{\text{low}}$ is directly bootstrapped from the skill value function of the same skill $\bm{z}$ at $(t+1)$th step without resampling. This approach allows for learning the skill value function at a granular level without defined skill boundaries. However, this approach could lead to potential oversights in long-term strategic planning, especially when the skill horizon \(k\) is notably shorter than the scope of structural information available to the agent. This mismatch might prevent the agent from considering various skill combinations that are within the perceived scope.

For the high-level target value function, as delineated in Equation~(\ref{eq:hierarchical_target}), the skill value function loss is formulated as:
\begin{equation}
  L_{\text{high}} = \mathbb{E}_{(s_t, h^C_t, \bm{z}_t, h^C_{t+k}) \sim \mathcal{B}}\left [ \left( y_{\text{high}}(h^C_t, \bm{z}_t, h^C_{t+k}) - Q(h^C_t, \bm{z}_t) \right)^2| \xi(s_t) = 1 \right], \nonumber
\end{equation}

Conversely, the loss function for the low-level target, as depicted in Equation~(\ref{eq:flat_target}), incorporates the importance sampling ratio, \(\rho_t\), into its formulation:
\begin{equation}
  L_{\text{low}} = \mathbb{E}_{(s_t, a_t, h^C_t, h^C_{t+1}) \sim \mathcal{B}, z \sim p(\bm{z}|s_t, a_t)}\left[\rho_t \cdot \left( y_{\text{low}}(h^C_t, \bm{z}, h^C_{t+1}) - Q(h^C_t, \bm{z}) \right)^2 \right], \nonumber
\end{equation}
where \(\rho_t = \frac{\sigma(a_t|\psi(s_t), \bm{z})}{\pi_b(a_t|s_t)}\) adjusts for the difference in action selection probabilities between the behavior \(\pi_b(a_t|s_t)\) and the target skill policy $\sigma(a_t|\psi(s_t), \bm{z}_t)$, thereby mitigating off-policy bias. This adjustment ensures a more accurate estimation of the skill value functions under the target skill policy, even with data from differing policies. 

Consequently, the learned skill value functions collectively highlight the relative benefits of different skills, quantified in terms of local entropy changes for each scenario, thereby providing a structural representation that is directly pertinent to exploration efforts.

\subsection{Skill Distribution Derivation} \label{sect:skill_dist_derivation}
With the learned structural representation consisting of skill-value functions \(Q(h^C_t, \bm{z})\), $z\sim\mathcal{Z}$, we construct the skill distribution in the form of Boltzmann distribution as:
\begin{equation}
    p(\bm{z}|h^C_t) = \frac{e^{-E(\bm{z})}}{\sum_{\bm{z}'}{e^{-E(\bm{z}')}}} = \frac{e^{Q(h^C_t, \bm{z})}}{\sum_{\bm{z}'} e^{Q(h^C_t, \bm{z}')}}, \nonumber
\end{equation}
where \(E(\bm{z}) = -Q(h^C_t, \bm{z})\) serves as the energy function, inversely correlating with the skill's value function. This ensures that skills with higher skill values are more likely to be selected, while also providing opportunities for the selection of lower-valued skills. The rationale of this form is provided in Section \ref{sect:theoretical_analysis}.

In practice, if the context horizon $C$ is much larger than the skill horizon $k$, the difference between skill value functions can be minor, which impedes the agent from exploiting the learned structural information to make a difference. Moreover, the skill values might not be accurate and may trap the agent in a local minima, requiring additional exploration. To strike a better balance between exploration and exploitation, we introduce a temperature parameter \(T_{\text{dyna}}\) to modulate the balance between exploration and exploitation. The value of \(T_{\text{dyna}}\) is determined by the local entropy \(h^C_t\) and ranges from \(T_{\text{min}}\) to \(1\), where \(0 < T_{\text{min}} < 1\). It is defined as:
\begin{equation}
    T_{\text{dyna}} = e^{\min\left(\frac{H(\bm{\Phi}(h^C_t))}{H^{C}_{\text{max}}}, 1\right)\cdot\log T_{\text{min}}}, \label{eq:dyna_temperature}
\end{equation}
where \(H^{C}_{\text{max}}\) is the maximum recorded local entropy in the historical window of length \(C\). A high local entropy \(H(\bm{\Phi}(h^C_t))\) signals effective skill selection, favoring a lower temperature \(T_{\text{dyna}}\) to further exploit the skill with the highest skill value. Conversely, a low local entropy \(H(\bm{\Phi}(h^C_t))\) often indicates an impasse or complex scenarios requiring careful reevaluation, prompting a higher temperature \(T_{\text{dyna}}\) to promote exploration among different skills. Therefore, \(T_{\text{dyna}}\) responsively reflects the agent's immediate exploratory context, adjusting the exploration strategy in real-time. The adaptive skill distribution with $T_{\text{dyna}}$ can be formulated as:
\begin{equation}
    p_{\text{dyna}}(\bm{z}|h^C_t) =  \frac{e^{Q(h^C_t, \bm{z})/T_{\text{dyna}}}}{\sum_{\bm{z}'} e^{Q(h^C_t, \bm{z}')/T_{\text{dyna}}}}.\label{eq:dyna_temperature_dist}
\end{equation}

As depicted in Fig.~\ref{fig:learning_process}, by directly mapping the skill-value function to the skill distribution function, our approach eliminates the need for an additional learning mechanism for the exploration strategy. This aspect notably distinguishes our work from the VFS approach by Shah et al. \citep{shah2021value}, which requires supplementary policy learning on the value function space. Our strategy not only simplifies the exploration process but also ensures efficient exploration and dynamic adaptation to the evolving understanding of the environment.

\subsection{Goal Exploration Strategy}
\label{sect:geasd}
In this section, we present a detailed framework aimed at optimizing skill distribution, named Goal Exploration via Adaptive Skill Distribution (GEASD). This framework systematically divides the exploration process into two key stages: navigation towards novel areas and exploration empowered by the adaptive skill distribution mechanism.

In regions that have been thoroughly explored, the extensive historical data stored in the replay buffer offers greater insights than the limited history context $h^C_t$, which is confined to the current episode only. Consequently, we implement a sub-goal selection strategy that directs the agent towards novel areas not yet extensively explored. In this context, we opt for the OMEGA sub-goal selection strategy \citep{pitis2020maximum}, specifically designed to favor achieved goals characterized by low density within the historical data. It is crucial to emphasize that this sub-goal selection strategy is flexible and can be replaced with any appropriate alternative. Subsequently, we employ our adaptive skill distribution strategy to enhance exploration, leveraging structural information for deeper exploration. The comprehensive algorithm is detailed in Algorithm~\ref{alg:geasd}. In the GEASD framework, we introduce GEASD-H and GEASD-L to distinguish our methodologies based on the approach to learning the skill-value functions from the target values $y_\text{high}$ and $y_\text{low}$, as detailed in Section~\ref{sect:structural_representation}.

\begin{algorithm}[t]
\caption{Goal Exploration via Adaptive Skill Distribution (GEASD)}
\label{alg:geasd}
\textbf{Given:} a skill policy $\sigma$, a goal-conditioned policy $\pi$, a sub-goal selection policy $\pi_g$, a skill distribution $p$, a skill horizon $k$, a replay buffer $\mathcal{B}$, an episode horizon $K$, a context horizon $C$.
\begin{algorithmic}[1] 
\Procedure{GEASD}{}
    \State Initialize $z \gets \varnothing$, $t \gets 0$, $\Delta t \gets 0$, $h^C_t \gets (s_0,)$, Navigation-Flag $\gets 1$
    \State Sample $s_0$ from an initial state distribution and a sub-goal $g \sim \pi_g(\mathcal{G})$
    \While{$t \leq K$} 
        \If {Navigation-Flag $= 1$}
            \State Sample an action $a_t \sim \pi(s_t, g)$
        \Else
            \State $\Delta t \gets \Delta t + 1$
            \State Sample an action $a_t \sim \sigma(\psi(s_t), \bm{z})$
        \EndIf
        \State Obtain the next state $s_{t+1} \sim \mathcal{T}(s_{t+1}|s_t, a_t)$
        \State Update the context $h^C_{t+1} \gets h^{C-1}_t \oplus (a_t, s_{t+1})$
        \State Increment the time step $t \gets t + 1$
        \If {$g = \phi(s_t)$ and Navigation-Flag $= 1$} 
            \State Set Navigation-Flag $\gets 0$, reset $\Delta t \gets 0$ \Comment{Enter the second stage}
        \EndIf
        \If {$\Delta t$ mod $k = 0$ and Navigation-Flag $= 0$} 
            \State Compute $T_{dyna}$ according to Eq.~(\ref{eq:dyna_temperature}) and  reset $\Delta t \gets 0$
            \State Draw a skill $\bm{z}$ from $p_{\text{dyna}}(\bm{z}|h^C_t)$ as defined in Eq.~(\ref{eq:dyna_temperature_dist})
        \EndIf
        \State Save the tuple $(s_t, a_t, s_{t+1}, \bm{z}, g)$ in the replay buffer $\mathcal{B}$.
    \EndWhile
\EndProcedure
\end{algorithmic}
\end{algorithm}

\section{Theoretical Analysis}
\label{sect:theoretical_analysis}
In this section, we first conduct a comprehensive analysis to illuminate the reasoning behind choosing the Boltzmann distribution as the form of our skill distribution, subject to certain conditions. Subsequently, we examine the relationship between the expected local entropy changes in Section~\ref{sect:structural_representation}, which we approximate using skill value functions, and the optimization objective proposed in Section~\ref{sect:geasd}. The proofs of those propositions can be found in Appendix \ref{appendix:proof_propositions}.

Our goal is to maximize the local entropy \(H(\bm{\Phi}(h^C_{t+k})|h^C_t)\) after executing skills drawn from the skill distribution \(p(\bm{z}|h^C_t)\). To precisely quantify the impact of skill execution, the local entropy change resulting from the execution of a specific skill characterized by \(\bm{z}\) is formulated as:
\begin{align}
  \Delta H(\bm{\Phi}(h^C_{t+k})|h^C_t, \bm{z}) &= H(\bm{\Phi}(h^C_{t+k})|h^C_t, \bm{z}) - H(\bm{\Phi}(h^C_t)|h^C_t, \bm{z})  \label{eq:entropy_gain_origin} \\
  &= H(\bm{\Phi}(h^C_{t+k})|h^C_t, \bm{z}) - H(\bm{\Phi}(h^C_t)|h^C_t) \label{eq:entropy_gain_simp}
\end{align}

The transition from Eq.~(\ref{eq:entropy_gain_origin}) to Eq.~(\ref{eq:entropy_gain_simp}) arises because the distribution of historically achieved goals, given \(h^C_t\), is independent of subsequent skill selection. 

\begin{proposition} \label{prop:boltzamann_distribution}
Consider a set of skills denoted by \( \mathcal{Z} \), where each skill \( \bm{z}_i \in \mathcal{Z} \) uniquely covers a portion of the achieved goals, thereby ensuring \( H(\mathcal{Z}|s_t, \bm{\Phi}(h^k_{t+k})) = 0 \). The optimal skill distribution, conditioned on the historical trajectory \( h^C_t \) and aiming to maximize the entropy \( H(\bm{\Phi}(h^C_{t+k})|h^C_t) \), conforms to a form of the Boltzmann distribution. This distribution is expressed as
\begin{equation}
p^*(\bm{z}|h^C_t) = \frac{e^{-E^*(\bm{z})}}{\sum_i e^{-E^*(\bm{z}_i)}} = \frac{e^{\Delta H(\bm{\Phi}(h^C_{t+k})|h^C_t, \bm{z})}}{\sum_i e^{\Delta 
 H(\bm{\Phi}(h^C_{t+k})|h^C_t, \bm{z}_i)}}, \label{eq:Boltzmann_distribution} \nonumber
\end{equation}
where the energy function \(E^*(\bm{z})\) associated with skill \( z \) is quantified by the resulting entropy gain after execution, \( - \Delta H(\bm{\Phi}(h^C_{t+k})|h^C_t, \bm{z}) \).
\end{proposition}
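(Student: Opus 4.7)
The plan is to reduce the maximization of $H(\bm{\Phi}(h^C_{t+k})|h^C_t)$ over $p(\bm{z}|h^C_t)$ to a standard Gibbs-type variational problem, after using the chain rule of entropy together with the non-overlap assumption to strip the problem down to an ``entropy plus linear expectation'' form.

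\textbf{Step 1: Chain rule decomposition.} First I would introduce the skill $\bm{z}$ as an auxiliary variable and write
\begin{equation}
H(\bm{\Phi}(h^C_{t+k})\mid h^C_t) \;=\; H(\bm{\Phi}(h^C_{t+k}),\bm{z}\mid h^C_t) \;-\; H(\bm{z}\mid \bm{\Phi}(h^C_{t+k}),h^C_t), \nonumber
\end{equation}
then expand the first term as $H(\bm{z}\mid h^C_t) + H(\bm{\Phi}(h^C_{t+k})\mid \bm{z},h^C_t)$. The assumption $H(\mathcal{Z}\mid s_t, \bm{\Phi}(h^k_{t+k})) = 0$ is exactly what is needed to kill the subtracted term: conditioning on $h^C_t$ determines $s_t$, and (for $C\ge k$) conditioning on $h^C_{t+k}$ determines $\bm{\Phi}(h^k_{t+k})$, so $H(\bm{z}\mid \bm{\Phi}(h^C_{t+k}), h^C_t) \le H(\mathcal{Z}\mid s_t, \bm{\Phi}(h^k_{t+k})) = 0$. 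Care is needed here, as this is the only place the hypothesis enters, and I expect this to be the main obstacle.

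\textbf{Step 2: Rewrite in terms of the entropy change.} Next I would substitute the definition $H(\bm{\Phi}(h^C_{t+k})\mid h^C_t,\bm{z}) = \Delta H(\bm{\Phi}(h^C_{t+k})\mid h^C_t,\bm{z}) + H(\bm{\Phi}(h^C_t)\mid h^C_t)$ from Eq.~(\ref{eq:entropy_gain_simp}). Denoting $p_i = p(\bm{z}_i\mid h^C_t)$ and $\Delta H_i = \Delta H(\bm{\Phi}(h^C_{t+k})\mid h^C_t,\bm{z}_i)$, this yields
\begin{equation}
H(\bm{\Phi}(h^C_{t+k})\mid h^C_t) \;=\; \underbrace{-\sum_i p_i \log p_i}_{H(\bm{z}\mid h^C_t)} \;+\; \sum_i p_i \,\Delta H_i \;+\; \underbrace{H(\bm{\Phi}(h^C_t)\mid h^C_t)}_{\text{constant in }p}. \nonumber
\end{equation}

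\textbf{Step 3: Gibbs variational problem.} The constant term drops out of the optimization, leaving the classical problem of maximizing $-\sum_i p_i \log p_i + \sum_i p_i \Delta H_i$ subject to $\sum_i p_i = 1$ and $p_i \ge 0$. I would solve it by introducing a Lagrange multiplier $\lambda$ for the normalization constraint, setting $\partial/\partial p_i[-p_i\log p_i + p_i \Delta H_i - \lambda p_i] = -\log p_i - 1 + \Delta H_i - \lambda = 0$, and solving to get $p_i \propto e^{\Delta H_i}$. Normalization fixes the partition function, yielding exactly $p^*(\bm{z}\mid h^C_t) = e^{\Delta H(\bm{\Phi}(h^C_{t+k})\mid h^C_t,\bm{z})}/\sum_j e^{\Delta H(\bm{\Phi}(h^C_{t+k})\mid h^C_t,\bm{z}_j)}$. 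Strict concavity of the Shannon entropy combined with the linear term guarantees this critical point is the unique global maximizer, and $p_i^* > 0$ automatically satisfies the non-negativity constraints. The identification of the energy function as $E^*(\bm{z}) = -\Delta H(\bm{\Phi}(h^C_{t+k})\mid h^C_t,\bm{z})$ then reads off directly.
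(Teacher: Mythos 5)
Your proposal is correct and follows essentially the same route as the paper's proof: the same chain-rule decomposition of $H(\bm{\Phi}(h^C_{t+k})\mid h^C_t)$ into $H(\mathcal{Z}\mid h^C_t)$ plus the skill-conditioned entropy, the same use of the hypothesis $H(\mathcal{Z}\mid s_t,\bm{\Phi}(h^k_{t+k}))=0$ to eliminate the residual term, and the same Lagrangian maximization yielding $p_i^\ast\propto e^{\Delta H_i}$. The only differences are cosmetic (you subtract the constant $H(\bm{\Phi}(h^C_t)\mid h^C_t)$ before optimizing rather than after, and you add the observation that strict concavity gives uniqueness), so no further comment is needed.
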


Proposition \ref{prop:boltzamann_distribution} demonstrates that the Boltzmann distribution adequately serves as the form of skill distribution to optimize our objective, obviating the necessity for further learning under conditions \(H(\mathcal{Z}|s_t, \bm{\Phi}(h^k_{t+k})) = 0\). This proposition establishes the groundwork for our derivation of skill distribution, as outlined in Section \ref{sect:skill_dist_derivation}.

\begin{proposition}\label{prop:expect_lowerbound}
The local entropy change, \(\Delta H(\bm{\Phi}(h^C_{t+k})|h^C_t, \bm{z})\), resulting from executing skill \(\bm{z}\), is lower bounded by the expected change in local entropy across each trajectory \(h^k_{t+k}\) sampled from \(p(h^k_{t+k}|s_t, \bm{z})\). This relationship is formally described by the following inequality:
\begin{equation}
    \Delta H(\bm{\Phi}(h^C_{t+k})|h^C_t, \bm{z}) \geq \mathbb{E}_{h^k_{t+k} \sim p(\cdot|s_t, \bm{z})} \left[ H(\bm{\Phi}(h^C_{t+k})|h^C_{t+k}) - H(\bm{\Phi}(h^C_t)|h^C_t) \right]. \nonumber
\end{equation}
\end{proposition}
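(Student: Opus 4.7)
The plan is to reduce the stated inequality to a direct application of Jensen's inequality using the concavity of Shannon entropy on the probability simplex. I would proceed in three steps.

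First, I would unfold both sides via Definition 1 and Eq.~(12). On the LHS one has $\Delta H(\bm{\Phi}(h^C_{t+k})|h^C_t, \bm{z}) = H(\bm{\Phi}(h^C_{t+k})|h^C_t, \bm{z}) - H(\bm{\Phi}(h^C_t)|h^C_t)$, where the simplification in the second term is precisely the observation noted in the transition from Eq.~(11) to Eq.~(12): the empirical distribution of goals already recorded in $h^C_t$ is deterministic given $h^C_t$, so conditioning additionally on $\bm{z}$ is superfluous. The summand inside the expectation on the RHS is the per-realization local entropy change $H(\bm{\Phi}(h^C_{t+k})|h^C_{t+k}) - H(\bm{\Phi}(h^C_t)|h^C_t)$. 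Since $H(\bm{\Phi}(h^C_t)|h^C_t)$ is nonrandom and appears identically on both sides, it cancels, and the proposition reduces to
\begin{equation*}
H(\bm{\Phi}(h^C_{t+k})|h^C_t, \bm{z}) \;\geq\; \mathbb{E}_{h^k_{t+k} \sim p(\cdot|s_t,\bm{z})}\!\left[H(\bm{\Phi}(h^C_{t+k})|h^C_{t+k})\right].
\end{equation*}

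Second, I would make the distributions underlying each entropy explicit. For any concrete realization of $h^k_{t+k}$, let $\hat{p}(g\mid h^C_{t+k})$ denote the empirical goal distribution induced by the sequence $\bm{\Phi}(h^{C-k}_t \oplus h^k_{t+k})$, i.e.\ the fraction of positions in the window at which the achieved goal equals $g$. The RHS summand is then exactly the Shannon entropy of this empirical distribution. For the LHS, specializing Eq.~(5) to the degenerate skill distribution concentrated at $\bm{z}$ identifies the relevant goal distribution as $p_{\bm{z}}(g\mid h^C_t) = \mathbb{E}_{h^k_{t+k}}[\hat{p}(g\mid h^C_{t+k})]$, so the LHS is the Shannon entropy of this \emph{averaged} empirical distribution.

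Third, I would invoke Jensen's inequality. Treating $\hat{p}(\cdot\mid h^C_{t+k})$ as a simplex-valued random variable whose randomness comes solely from $h^k_{t+k}\sim p(\cdot\mid s_t,\bm{z})$, and using concavity of the Shannon entropy functional $H(p) = -\sum_g p(g)\log p(g)$, one obtains
\begin{equation*}
H\!\left(\mathbb{E}_{h^k_{t+k}}\!\bigl[\hat{p}(\cdot\mid h^C_{t+k})\bigr]\right) \;\geq\; \mathbb{E}_{h^k_{t+k}}\!\left[H\!\bigl(\hat{p}(\cdot\mid h^C_{t+k})\bigr)\right],
\end{equation*}
which is precisely the reduced inequality above.

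The main conceptual hurdle is not computational but definitional: carefully separating two superficially similar objects, namely the entropy of the averaged empirical distribution (the LHS, encoding the agent's expected local entropy after committing to $\bm{z}$) versus the expectation of the per-trajectory empirical entropies (the RHS). Once this distinction is in place, recalling that Shannon entropy is concave (not convex) fixes the direction of Jensen's inequality; the rest is purely algebraic bookkeeping of the cancellation in step one. Equality holds exactly when $\hat{p}(\cdot\mid h^C_{t+k})$ is almost-surely constant in the trajectory sample, i.e.\ when the skill produces a single deterministic empirical goal distribution.
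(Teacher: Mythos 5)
Your proposal is correct and follows essentially the same route as the paper's proof: both identify the post-skill goal distribution as the expectation over $h^k_{t+k} \sim p(\cdot|s_t,\bm{z})$ of the per-trajectory empirical goal distribution, apply Jensen's inequality via the concavity of Shannon entropy, and then restore the cancelled (deterministic) term $H(\bm{\Phi}(h^C_t)|h^C_t)$ using the simplification from Eq.~(\ref{eq:entropy_gain_simp}). Your added remark on the equality condition is a small bonus beyond what the paper states.
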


Recall the expected local entropy changes as outlined in Eq.~(\ref{eq:expected_local_entropy_changes}), which the skill value functions aim to approximate. Proposition~\ref{prop:expect_lowerbound} shows that the skill value functions are approximating a lower bound of \(\Delta H(\bm{\Phi}(h^C_{t+k})|h^C_t, \bm{z})\), serving as a conservative yet informative evaluation.

\section{Experiments}
In this section, we assess the effectiveness of our GEASD framework, focusing on its success rate, sampling efficiency, and the entropy of achieved goals. Our evaluation covers two challenging sparse-reward, long-horizon GCRL tasks. Additionally, we explore and visualize the exploratory behaviors exhibited by different methodologies. Subsequently, we examine the performance of skill execution within the learned skill distribution, extending our analysis to two unseen tasks characterized by sparse rewards and long horizons. Finally, an ablation study is conducted to delve into the impact of various hyperparameters within the GEASD framework, providing insights into their influence on overall performance.

\subsection{Environments and Baselines}
\subsubsection{Environments}
As shown in Fig.~\ref{fig:train_tasks}, our experimental framework encompasses two environments, each characterized by sparse rewards and extended horizons: i) \texttt{PointMaze-Spiral}: a two-dimensional maze in which an agent must navigate through a 10x10 maze configured in a counterclockwise spiral. The task demands precise navigation from the maze's bottom-left, denoted with the blue circle, to its center, marked by a red cross. Its observation is an eight-dimensional vector consisting of its position in the maze, its position relevant to the current grid, and a boolean vector indicating the existence of surrounding walls relative to the current grid. In the process, the agent may bump into walls or navigate backward along the incoming path, wasting exploratory efforts. ii) \texttt{AntMaze-U} \citep{pitis2020maximum,trott2019keeping}: involves a robotic control task where a four-legged robot must traverse a lengthy U-shaped corridor to attain the desired goal position. The observation space is thirty-dimensional, incorporating the robot's sensor data and spatial coordinates. The robot can only move in a slow and jittery manner, making it challenging to explore new goals and learn goal-reaching behavior. Successfully reaching the stipulated goals in both \texttt{PointMaze-Spiral} and \texttt{AntMaze-U} is deemed a success.

\begin{figure}[t]
    \centering
    \begin{subfigure}{0.24\textwidth}
        \centering
        \includegraphics[width=\linewidth]{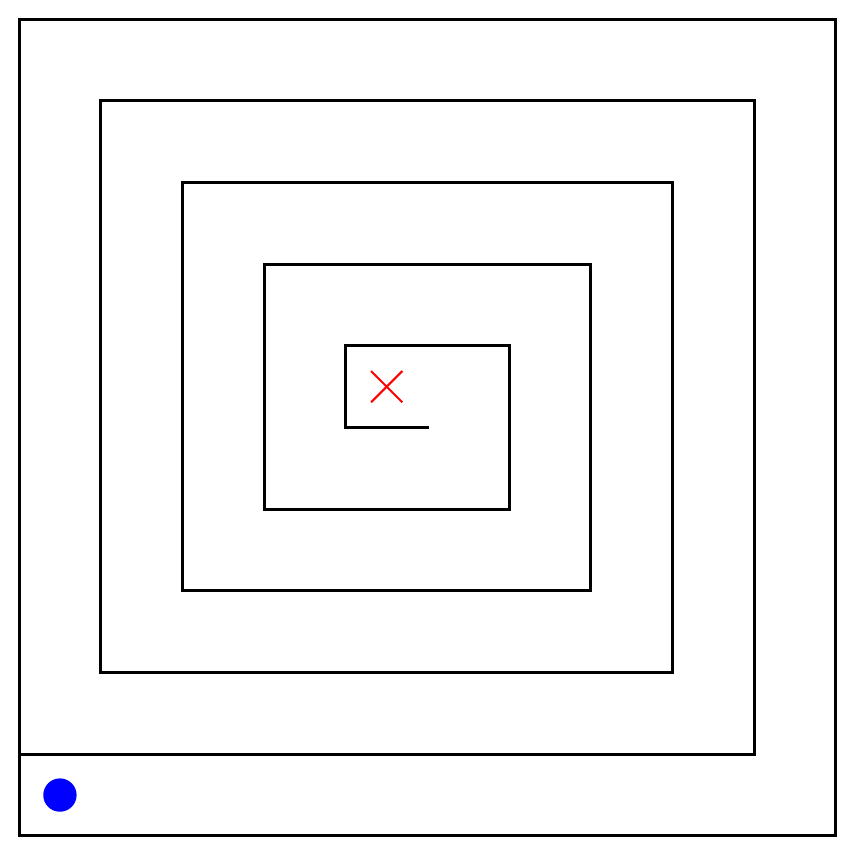}
        \caption{\texttt{PointMaze-Spiral}}
        \label{fig:point_maze_spiral_env}
    \end{subfigure}
    \hspace{3em}
    \begin{subfigure}{0.24\textwidth}
        \centering
        \includegraphics[width=\linewidth]{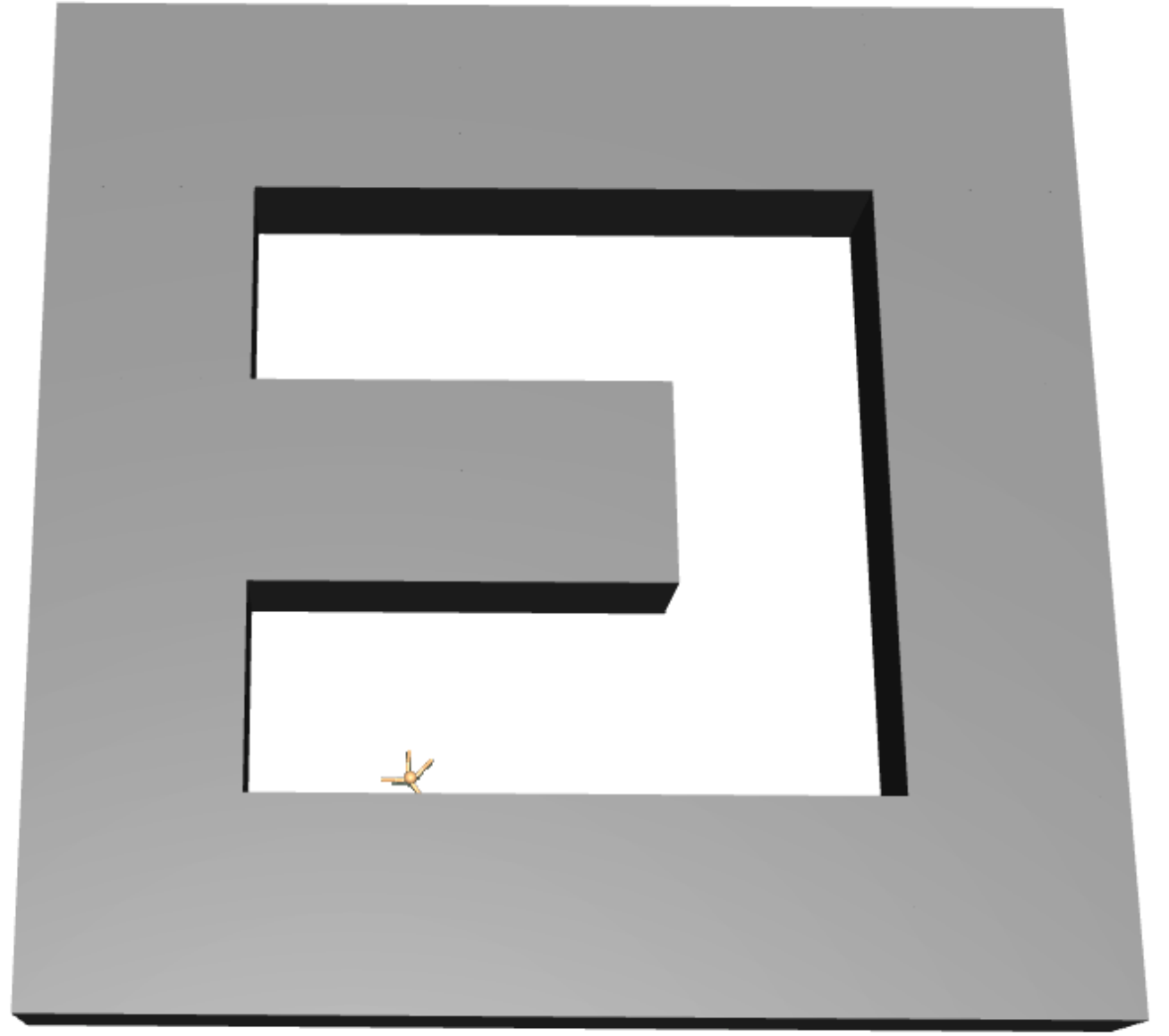}
        \caption{\texttt{AntMaze-U}}
        \label{fig:ant_maze_env}
    \end{subfigure}
    \caption{The experimental environments.}
    \label{fig:train_tasks}
\end{figure}

\subsubsection{Baselines}
Our study aims to assess the impact of skill-augmented exploration, particularly focusing on the effectiveness of adaptive skill distribution in enhancing exploration efficiency. Consequently, we benchmark our methodology against two foundational baselines: i) OMEGA \citep{pitis2020maximum}: This represents an advanced goal-exploration method that leverages primitive actions. Within OMEGA, sub-goals are strategically chosen based on the scarcity of achieved goals. ii) GEAPS \citep{wu2024goal}: This approach augments goal-exploration with pre-trained skills distributed uniformly. The strategy for selecting sub-goals in GEAPS mirrors that in OMEGA, emphasizing the role of skill-based augmentation in exploration tasks.

\subsection{Experiment Settings And Implementation}
\subsubsection{Experimental Settings} \label{sect:exp_settings}
This research aims to address several pivotal questions to understand our GEASD framework and our adaptive skill distribution. Specifically, we focus on the following inquiries:
Q1) How is learning efficiency improved through augmentation with skills via our adaptive skill distribution compared to uniform skill distribution?
Q2) What are the behavioral changes resulting from our adaptive distribution?
Q3) Can our learned skill distribution assist an agent in exploring novel environments with similar local structures?
Q4) What effects do the various hyper-parameters have on the learning performance?

For the comparative analysis, five trials with varied random seeds are conducted to ensure a robust evaluation of performance across each task. Assessments are bound by a predetermined budget, which dictates that training ceases after a specified number of steps if an agent has not met the ultimate goals. The evaluation criteria encompass the success rate, serving as a measure of task completion; sample efficiency, which gauges the efforts required to attain similar success rates; and the entropy of achieved goals, highlighting the overall exploratory objective.

\subsubsection{Implementation} \label{sect:implementation}
Our method, GEASD, is implemented utilizing the GEAPS codebase \citep{wu2024goal} and integrates pre-trained skills developed through the skill learning methodologies present in GEAPS. The learning of goal-conditioned policies, for both our approach and the comparative baselines, is facilitated by using the Deep Deterministic Policy Gradient (DDPG) algorithm \citep{lillicrap2015continuous}. For benchmarking purposes, we have employed the source codes of the baselines provided by the original authors, ensuring strict adherence to their documented guidelines.

The skill value functions within the GEASD framework are constructed using the Gated Recurrent Unit (GRU) \citep{cho-etal-2014-learning}, a type of recurrent neural network (RNN) designed to process sequential data by incorporating historical context into its input and producing value functions as output. Regarding the hyperparameters outlined in the methodology section, we adjust $k$ to $2$ and $25$, and $C$ to $10$ and $125$ for the \texttt{PointMaze-Spiral} and \texttt{AntMaze-U} tasks, respectively. Additionally, for the dynamic temperature calculation, $T_\text{min}$ is configured to $0.01$ for both tasks.

Appendix~\ref{sect:details} provides more technical and implementation details regarding our model and the baselines in our comparative study.

\subsection{Experimental Results}
In this section, we present the principal findings of our experimental investigation, aimed at addressing the five questions outlined in Section~\ref{sect:exp_settings}.

\subsubsection{Comparative Study} \label{sect:comp_study}
To address the first research question, we report the outcomes achieved by the baseline methodologies alongside our GEASD-L and GEASD-H implementations. In the conducted experiments, training was concluded after 200,000 steps for the \texttt{PointMaze-Spiral} task and one million steps for the \texttt{AntMaze-U} task. Each episode was defined as consisting of 150 steps for \texttt{PointMaze-Spiral} and 200 steps for \texttt{AntMaze-U}, respectively. The reported statistics, including both the mean and the standard deviation, were calculated over five different initializations (seeds) for each experimental environment.
\begin{figure}[t]
    \centering
    \includegraphics[width=0.8\textwidth]{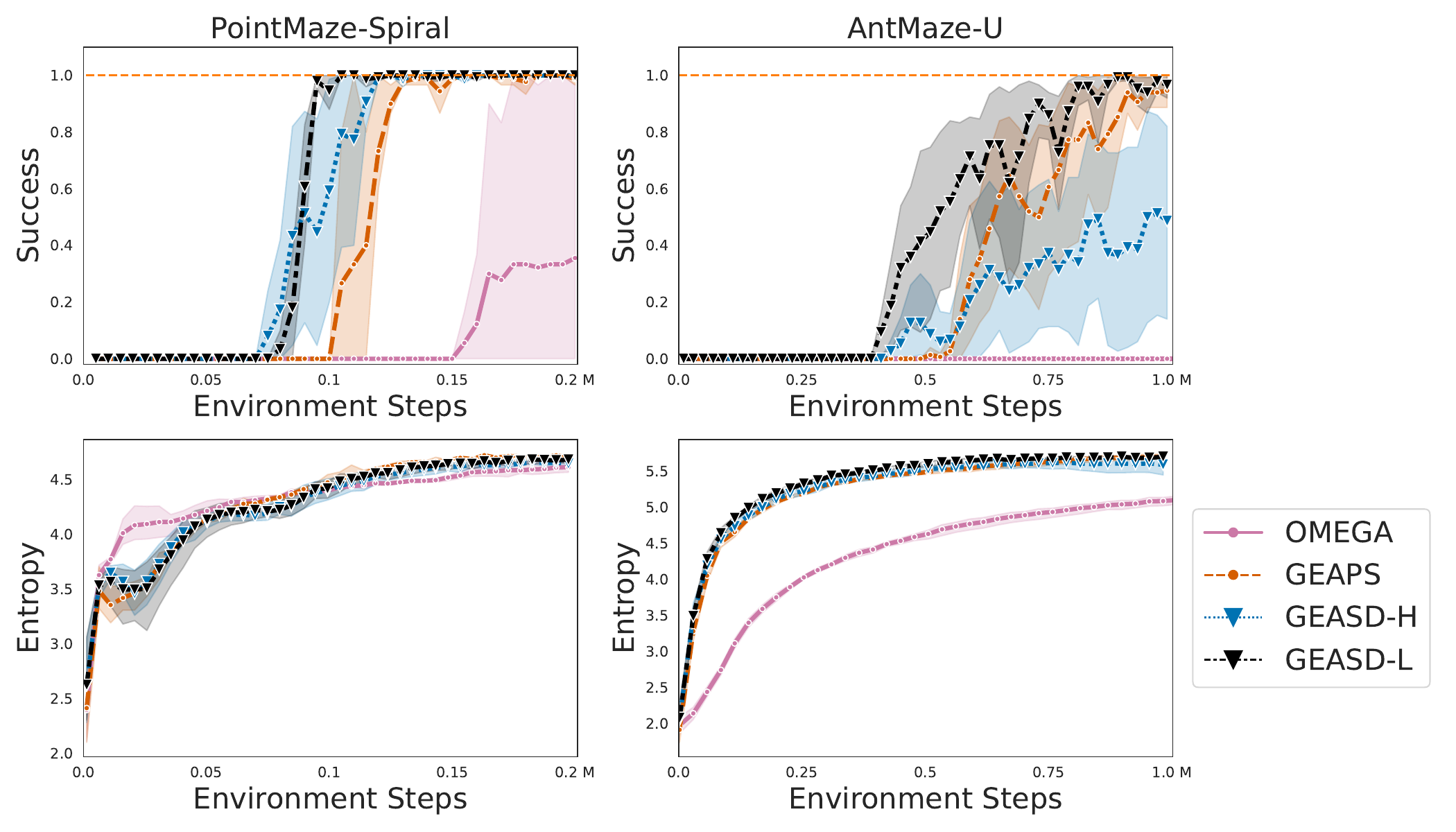}
    \caption{The test success on the desired goal distribution and the empirical entropy of achieved goals, throughout training on the two environments, for both the baselines and our models.}
    \label{fig:exp_comp_study}
\end{figure}

As illustrated in the first row of Fig.~\ref{fig:exp_comp_study}, our GEASD-L method demonstrates significant improvements in exploration efficiency over baseline methods in two challenging tasks. In \texttt{PointMaze-Spiral}, while the OMEGA method achieves only an average success rate of around 40\% at the end of training, skill-augmented exploration methods, including GEASD-L and GEASD-H, reach 100\% success rates. Notably, GEASD-H and GEASD-L achieve initial success 29\% and 24\% faster than GEAPS, respectively, and complete the tasks with full success 22\% and 8\% quicker than GEAPS. In \texttt{AntMaze-U}, OMEGA fails to achieve any success within one million training steps, whereas GEASD-L and GEASD-H reach initial success 19\% and 15\% faster than GEAPS. Throughout the training, only GEASD-L consistently achieves a 100\% success rate by approximately 900,000 steps across random seeds, while the success rates for GEASD-H and GEAPS plateau at about 50\% and 95\%, respectively. In comparison, GEASD-L reaches a 90\% success rate 20\% faster than GEAPS. As observed in Fig.~\ref{fig:exp_comp_study}, the learning process for GEASD-H is unstable in \texttt{AntMaze-U}, likely due to the limited availability of skill-level transition data, leading to significant variability in learning outcomes. These results underscore the efficiency of the GEASD methods in achieving earlier success compared to GEAPS, as well as the ability of GEASD-L to stabilize learning more effectively than GEASD-H and complete the tasks with success rates 100\% faster than those of GEAPS.

For the entropy of achieved goals, which estimates the coverage of the achieved goals, we show the empirical entropy of the achieved goals for the baselines and our methods in the second row of Fig.~\ref{fig:exp_comp_study}. In the \texttt{PointMaze-Spiral} environment, although OMEGA initially exhibits high entropy, it is surpassed by our skill-augmented methods, which achieve greater entropy through more effective exploration. Notably, GEASD-L and GEASD-H demonstrate marginal but consistent advantages over GEAPS. In the \texttt{AntMaze-U} setting, skill-augmented methods increase entropy significantly faster than OMEGA, with GEASD-L and GEASD-H again slightly outperforming GEAPS.

\begin{figure}[t]
    \centering
    \begin{subfigure}{\textwidth}
        \centering
        \includegraphics[width=\linewidth]{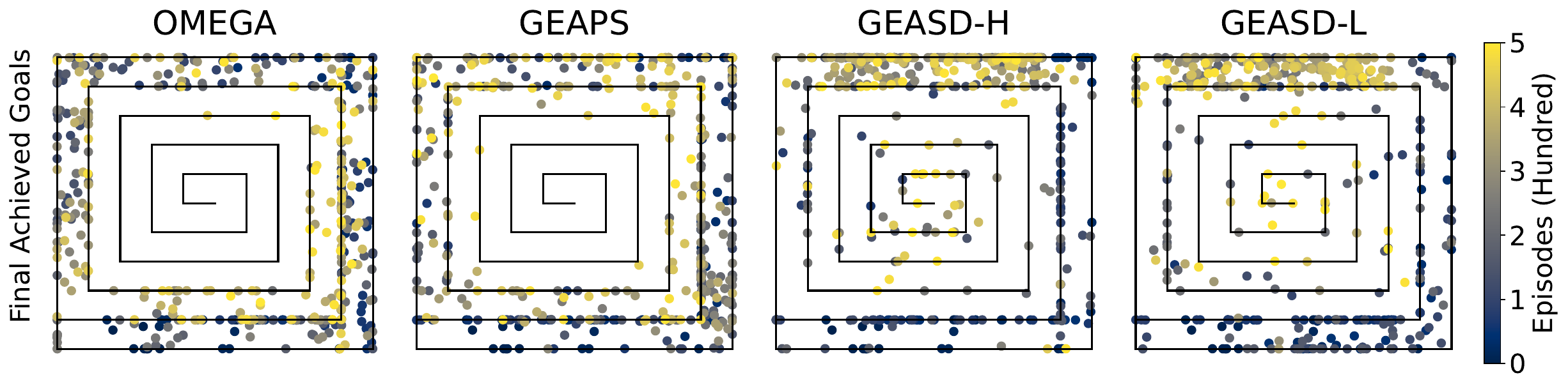}
        \caption{\texttt{PointMaze-Spiral}}
        \label{fig:point_maze_spiral_vis_heatmap}
    \end{subfigure}
    \vspace{1em}
    \begin{subfigure}{\textwidth}
        \centering
        \includegraphics[width=\linewidth]{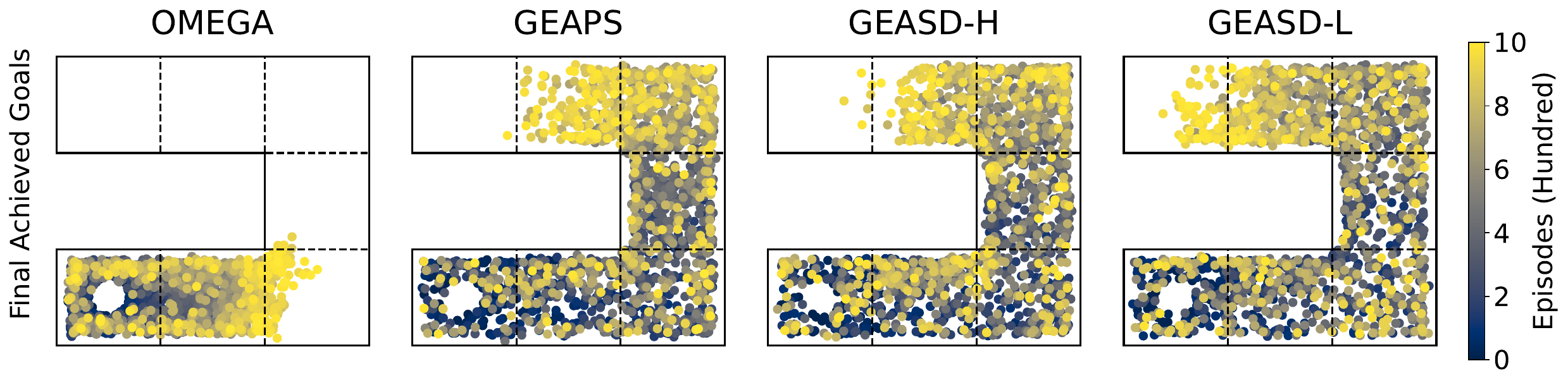}
        \caption{\texttt{AntMaze-U}}
        \label{fig:ant_maze_vis_heatmap}
    \end{subfigure}
    \caption{Visualization of the final goals achieved across historical episodes in (a)~\texttt{PointMaze-Spiral} and (b) \texttt{AntMaze-U}, with the training evolution process depicted through heatmaps.}
    \label{fig:visualization_heatmap}
\end{figure}

\subsubsection{Visualization of Exploration Progress}
To address the second question, we visualize the final goals achieved at the end of episodes, accumulated up to a certain timestep, to vividly illustrate the differences in exploration behaviors in Fig.~\ref{fig:visualization_heatmap}.

As depicted in Fig.~\ref{fig:point_maze_spiral_vis_heatmap}, none of the baselines have successfully explored the desired goal of \texttt{PointMaze-Spiral}, located in the center, within five hundred episodes. OMEGA has not fully explored the two outermost coils of the five in the spiral layout. Although GEAPS surpasses OMEGA through the augmentation of uniformly distributed skills, it has only just begun to extend its exploration to the third coil at the end of five hundred episodes. In contrast, both GEASD-H and GEASD-L have explored the locations of the goals within four hundred episodes only, showcasing the effectiveness of our adaptive distribution in adapting to the local structure and enhancing exploration efficiency.

In the case of \texttt{AntMaze-U}, which consists of seven grids as separated by dashed lines in Fig.~\ref{fig:ant_maze_vis_heatmap}, OMEGA has extended its exploration only to the third grid. GEAPS and GEASD-H have comparable goal coverage, extensively covering up to the sixth grid, but rarely explored the final grid where the desired goals are located. In contrast, GEASD-L has extensively explored the final grid. The lack of clear advantages in goal coverage by GEASD-H over GEAPS can be attributed to the sparsity of skill-level transition data, which hinders the agent's ability to efficiently leverage structural information. Conversely, GEASD-L benefits from low-level transition data, efficiently learning structural information to enhance goal coverage.

\subsubsection{Exploration Evaluation in Unseen Environments} \label{sect:explore_in_unseen}
The third research question is crucial as efficiency in the same environments may stem from merely reinforcing behaviors associated with the spread of locally achieved goals, rather than leveraging structural information.

To tackle the third question, we explore how our learned value-based skill distribution contrasts with those learned goal-conditioned policies in leveraging structural information in unseen environments that have similar or partially similar structures. In our evaluation, we discovered that while our chosen skills in \texttt{AntMaze-U} aid in exploration, they struggle to complete tasks independently, often becoming trapped in unrecoverable states.
Consequently, we focus our evaluation on environments similar to \texttt{PointMaze-Spiral}.

Specifically, we introduce two variations of \texttt{PointMaze-Spiral} as depicted in Fig.~\ref{fig:pointmaze_variants}: i) \texttt{PointMaze-Spiral-C}: a clockwise version of the spiral maze that demands a reversal in the direction of movement compared to \texttt{PointMaze-Spiral}. ii) \texttt{PointMaze-Serpentine}: a maze featuring horizontal serpentine paths that involve U-turns, which are unseen in the \texttt{PointMaze-Spiral}. Both variants share structural similarities with \texttt{PointMaze-Spiral} and require the agent to expend a comparable amount of time to learn and solve them from an initial state. For the convenience of further discussion, we will refer to these environments simply as \texttt{Spiral-C} and \texttt{Serpentine}, respectively, omitting the prefixes.
\begin{figure}[t]
    \centering
    \begin{subfigure}{0.24\textwidth}
        \centering
        \includegraphics[width=\linewidth]{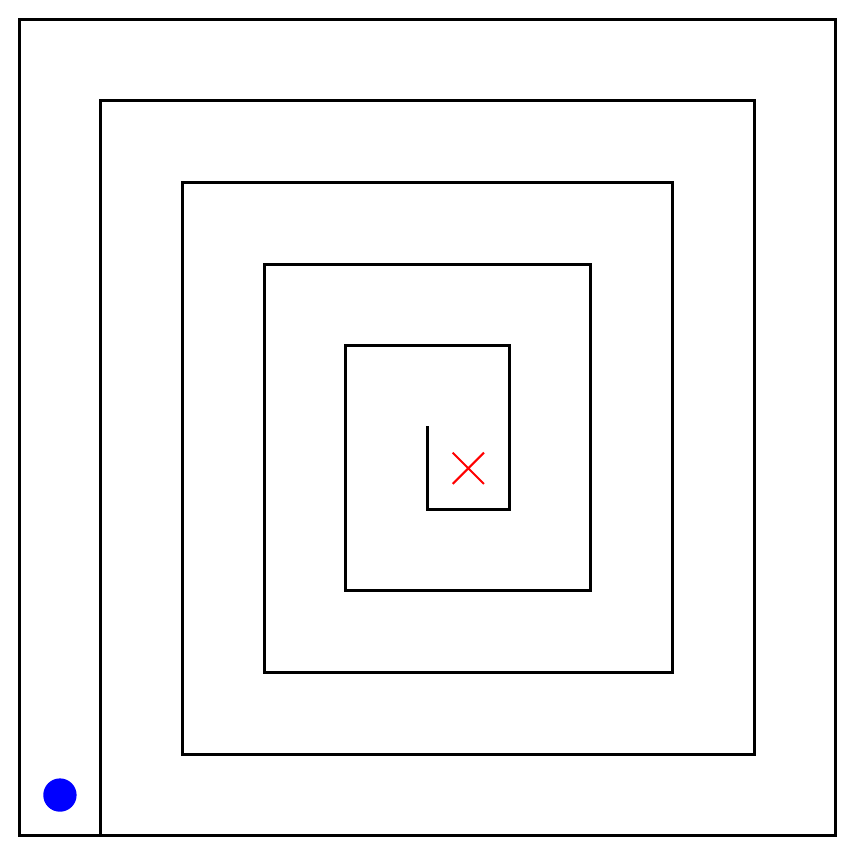}
        \caption{\texttt{Spiral-C}}
        \label{fig:inverse_spiral}
    \end{subfigure}
    \hspace{3em}
    \begin{subfigure}{0.24\textwidth}
        \centering
        \includegraphics[width=\linewidth]{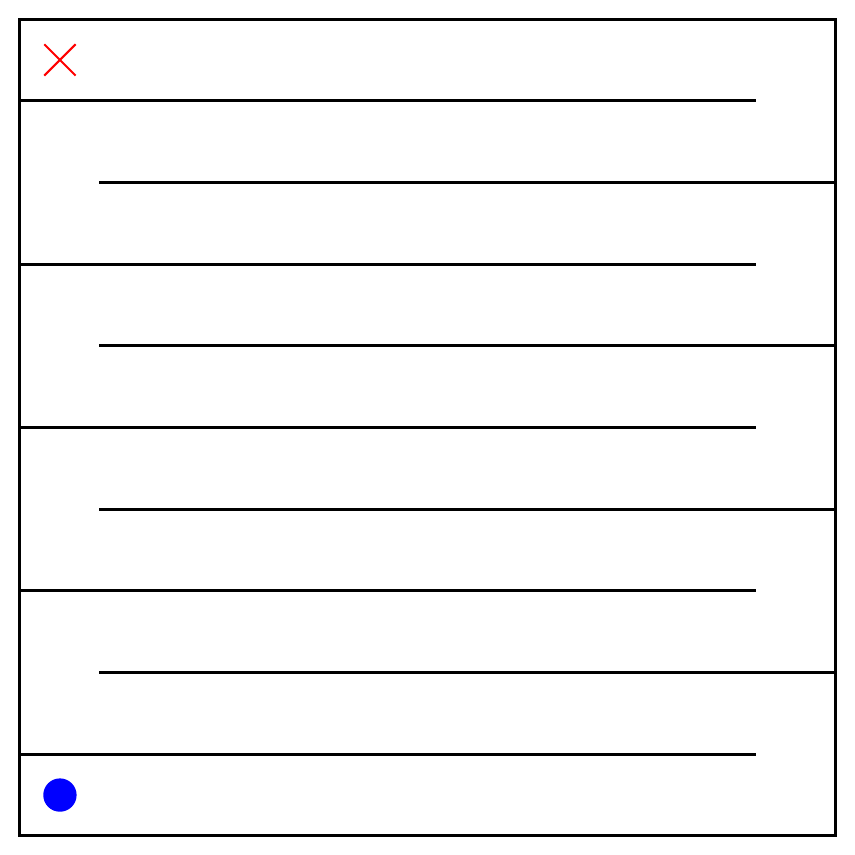}
        \caption{\texttt{Serpentine}}
        \label{fig:serpentine}
    \end{subfigure}
    \caption{Two variants of \texttt{PointMaze-Spiral}, where the blue circles denote the initilized positions and the red cross denote the goal positions.}
    \label{fig:pointmaze_variants}
\end{figure}

\begin{table}[ht]
\centering
\caption{Exploratory Performance in Unseen Variants of \texttt{PointMaze-Spiral}: \texttt{Spiral-C} and \texttt{Serpentine}.}
\label{table:generalization_test}
\begin{tabular}{l|ccc|ccc}
\toprule
& \multicolumn{3}{c|}{\texttt{PointMaze-Spiral-C}} & \multicolumn{3}{c}{\texttt{PointMaze-Serpentine}} \\
\midrule
\textbf{GC Policy} & \textit{SR} & \textit{Max Occ} & \textit{Avg Occ} & \textit{SR}  & \textit{Max Occ} & \textit{Avg Occ} \\
\midrule
GEASD-H & 0\% & 1\% & 1\% & 0\% & 11\% & 11\% \\
GEASD-L & 0\% & 2\% & 1.28\% & 0\% & 11\% & 10.94\% \\
GEAPS & 0\% & 1\% & 1.0\% & 0\% & 11\% & 7.46\%\\
OMEGA & 0\% & 2\% & 1.06\% & 0\% & 11\% & 11\%\\
Uniform  & 0\% & 18\% & 8.16\%& 0\% & 19\% & 7.48\% \\
\midrule
\textbf{Skill Policy} & \textit{SR} & \textit{Max Occ} & \textit{Avg Occ} & \textit{SR}  & \textit{Max Occ} & \textit{Avg Occ} \\
\midrule
GEASD-H & 54\%& 100\%& 94.12\% & 0\%& 70\%& 34.44\%\\
GEASD-L  & \textbf{90\%} & \textbf{100\%} & \textbf{97.96\%} & 0\% & \textbf{96\%} & \textbf{49.84\%}\\
Uniform  & 0\% & 34\% & 16.74\% & 0\% & 36\% & 15.9\% \\
\bottomrule
\end{tabular}
\end{table}

For the experiments on the two environments, we evaluate two types of policies derived from experimental methods: goal-conditioned (GC) policies and skill policies following a specific skill distribution, each including a uniform variant to serve as an unbiased benchmark for comparison. For GC policies, we directly evaluate their performance, conditioned on the desired goals. For the skill policies, we set the temperature \( T \) for the derivation of the skill distribution to a static value of $0.01$ to better exploit the learned structural information.

For the evaluation, we propose three metrics to assess performance: \emph{Success Rates} (\emph{Success}), \emph{Maximum Occupancy Ratio} (\emph{Max Occ}), and \emph{Average Occupancy Ratio} (\emph{Avg Occ}). Success rates are crucial for determining if the policies can discover the ultimate goals in new, unseen environments. However, since new environments often have different layouts, zero success rates alone do not fully capture how well the policies leverage structural information for exploration. Given that each environment consists of one hundred grids, the occupancy ratio—which measures the percentage of grids visited by the agent—serves as an effective indicator of exploration depth across the state space. \emph{Max Occ} represents the highest occupancy ratio achieved, highlighting the best exploratory efforts, while \emph{Avg Occ} calculates the average exploratory performance, offering insights into consistent exploratory behavior.
These metrics are calculated from fifty independent episodes in these environments,\textit{ without any further learning during these interactions.} It ensures that the performance metrics accurately reflect the capabilities of the policies at the end of the training process on \texttt{PointMaze-Spiral}.

As shown in Table~\ref{table:generalization_test}, GC policies fail to successfully reach the target grid in both environments, likely due to biased learning outcomes. As shown in the trajectory visualization in Appendix~\ref{sect:supp_exps}, we observe that the learned GC policies ignore the changes in the environmental structures and stick to move along the  learned path in the training task \texttt{PointMaze-Spiral}, leading at most 2\% and 11\% Max Occ in \texttt{Spiral-C} and \texttt{Serpentine}, respectively. In contrast, the uniformed policy has no bias in choose the actions, leading to better Max Occ as 18\% and 19\%, respectively. It means uniform policies potentially explore the environments more extensively than the learned GC policies. Combining the evaluations on both environments, the performances of GC policies vary more significantly than the uniform policy, which also shows the learned bias can potentially impede the exploration process.

In contrast, both GEASD-L and GEASD-H are capable of achieving significant exploration progress in both environments. Unlike the zero success of GC policies on \texttt{Spiral-C}, GEASD-H and GEASD-L achieve 54\% and 90\% success in reaching the desired grid, respectively. They both reach 100\% Max Occ and over 94\% Avg Occ, providing valuable data for the agent to learn goal-conditioned behaviors. In \texttt{Serpentine}, though they do not achieve any success, GEASD-H and GEASD-L reach 70\% and 96\% Max Occ, respectively, which are improvements of over  535\% and 770\% over the learned GC policies. Their Avg Occ also improves the best statistics of GC policies by  210\% and 350\%, respectively. Although the uniform skill policy performs better than the uniform GC policy given the occupancy metrics, it falls behind GEASD-H and GEASD-L by a large margin. From this comparison, the significant advantages of the GEASD methods are not solely attributed to the introduction of skills, which underscores the effectiveness of our learned skill distribution in leveraging the structural information.

In summary, the learned GC policies, which ignore structural information and base decisions solely on the absolute positions within the mazes, tend to overfit to the training environment. In contrast, our learned skill distribution, based on skill value functions, effectively leverages structural information. Even when faced with unseen structural challenges like the U-turns in \texttt{Serpentine}, leveraging the remaining known structural information can significantly enhance exploration efficiency, as evidenced by the statistics.

\subsubsection{Ablation Study}
To address the fourth research question, we conduct an ablation study focusing on a set of key experimental hyperparameters in the GEASD methods. Due to the possibility of GEASD-H encountering data inadequacy problems, we narrow our focus of the ablation study to GEASD-L only. Specifically, we explore: i) the impact of utilizing static temperatures for deriving the skill distribution, in comparison to our dynamic temperature design $T_{\text{dyna}}$. 
ii) the significance of including action history within the historical context. As the overall entropy does not differ significantly, as shown in the comparative study in Section~\ref{sect:comp_study}, we concentrate solely on the success rates on the desired goal distribution in the ablation study. For more experiments pertaining to the ablation study, see Appendix~\ref{sect:supp_exps}.

\begin{figure}[t]
    \centering
    \begin{subfigure}{0.9\linewidth}
        \includegraphics[width=\textwidth]{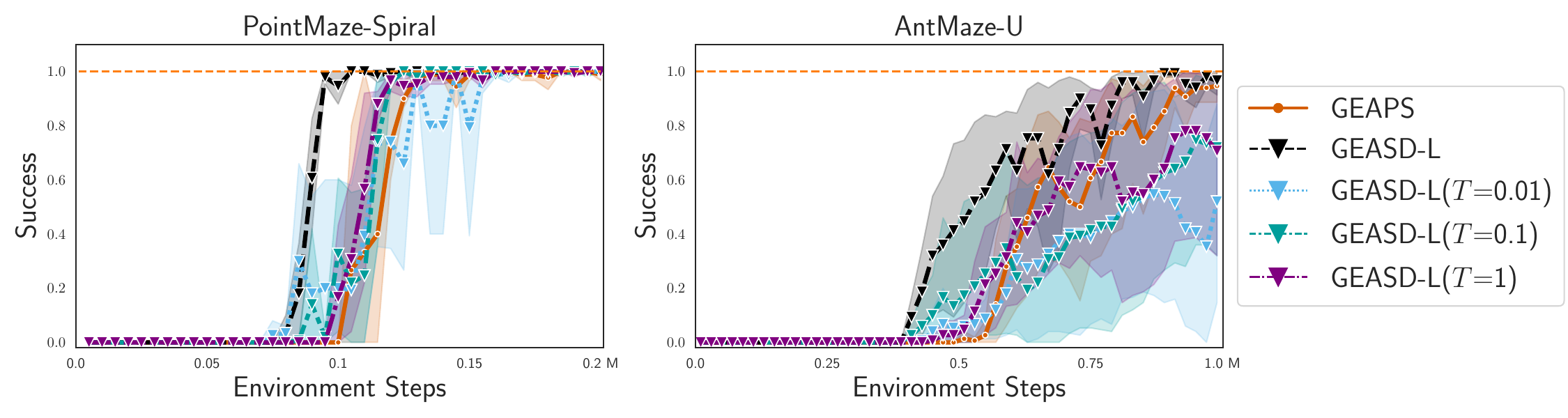}
        \caption{Dynamic vs. Static Temperature}
        \label{fig:ablation_study_temp}
    \end{subfigure}
    \centering
    \begin{subfigure}{0.9\linewidth}
        \includegraphics[width=\textwidth]{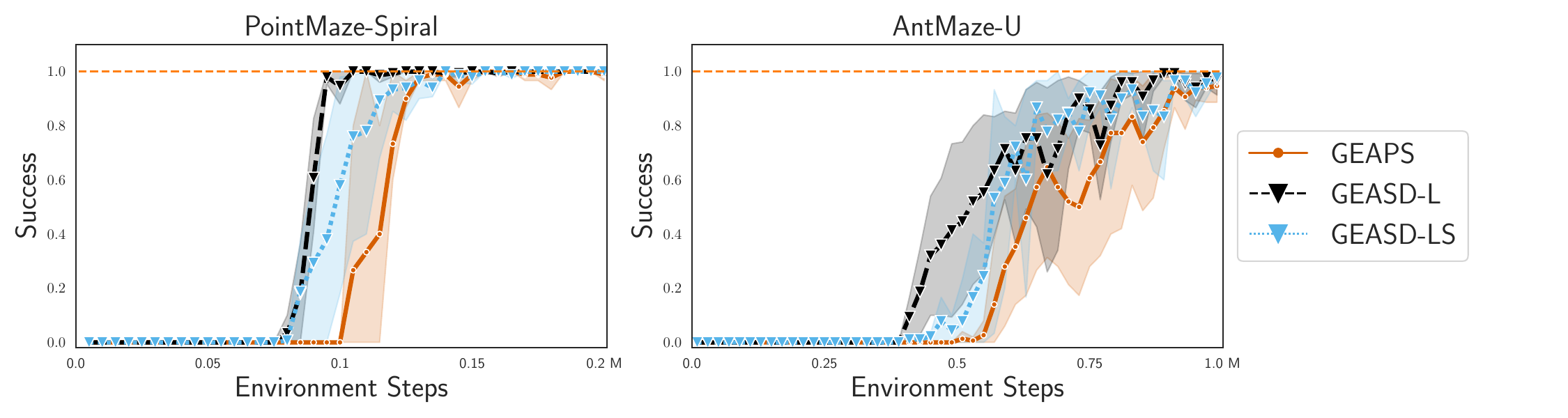}
        \caption{With vs. Without Action History}
        \label{fig:ablation_study_history}
    \end{subfigure}

    \caption{A comparative ablation study on achieving desired goal distribution: analyzing the impact of temperature settings and action history inclusion on success rates throughout training.}
\end{figure}

\paragraph{Dynamic vs. Static Temperature}
In our ablation study focused on the role of temperature in skill derivation, we introduced three experiments with skill distributions derived using static temperatures of $0.01$, $0.1$, and $1$. These temperatures were selected to provide even intervals on a logarithmic scale from $0.01$ to $1$, reflecting the varying scope of our dynamic temperature $T_\text{dyna}$. We refer to these experimental settings as GEASD-L($T=0.01$), GEASD-L($T=0.1$), and GEASD-L($T=1$). As depicted in Fig.~\ref{fig:ablation_study_temp}, all GEASD methods, whether employing static or dynamic temperatures, achieved earlier initial success compared to the GEAPS approach. This suggests that our methods might more effectively aid the agent in faster goal discovery than the uniform skill distribution used in GEAPS.
Among the variations, our GEASD-L with dynamic temperature achieves superior learning efficiency, reaching 100\% success rates faster than all other variants that employ static temperatures. Notably, GEASD-L($T=0.01$), which predominantly exploits skills with higher values and explores skills with lower values less frequently, exhibits the poorest performance in terms of final success rates and learning speed. This trend underscores that focusing primarily on exploiting the skills with the highest values, which may be inaccurate, can hinder the necessary exploration required to acquire higher-quality data.
On the other extreme, GEASD-L($T=1$) achieves initial success latest among the three variants, indicating its inefficiency in exploiting structural information to discover the desired goals. However, among the three static temperature variants, it converges to 100\% success on \texttt{PointMaze-Spiral} at the fastest rate and achieves the highest success rate of 80\% on \texttt{AntMaze-U}. This phenomenon is likely because its disadvantages in exploiting structural information are compensated by increased exploration trials, whereas the other two variants suffer from poorer data coverage due to their heavy reliance on skill value functions, which can be inaccurate in some scenarios.
The learning trend of GEASD-L($T=1$) most closely resembles that of GEAPS among the three variants, confirming our assumption that higher temperatures lead to a skill distribution that more closely mirrors the uniform distribution. The performance of GEASD-L($T=0.1$) lies between the two extremes. Overall, the ablation study highlights the advantages of $T_\text{dyna}$ over static temperatures and further substantiates our assumptions regarding the impact of temperature on skill distribution to a considerable extent.

\paragraph{With vs. Without Action History}
In our ablation study, which focuses on the role of action history in the historical context for skill derivation, we introduce a variant of GEASD-L that includes only state history in the historical context, denoted as GEASD-LS.
As illustrated in Fig.~\ref{fig:ablation_study_history}, the inclusion of action history enhances learning efficiency, particularly in the initial stages of learning in both environments. Specifically, GEASD-L reaches 95\% success in \texttt{PointMaze-Spiral} and over 40\% success in \texttt{AntMaze-U} approximately 17\% faster than GEASD-LS in both environments. After reaching these thresholds, the learning trends of both variants align closely within each environment. The improvement can be attributed to the fact that action history serves as a more reliable source of information in the early stages, potentially elucidating the relationships among historically achieved goals when the data are insufficient for the agent to discern such relationships from the state history alone.
The reliability of action history may stem from each action's association with specific semantic changes in state transitions, such as moving in a direction by a certain distance, which implicitly indicates the distances between achieved goals. Therefore, the ablation study highlights the advantages of including action history in addition to state history in the historical context.

\section{Discussion}
\label{sect:discussion}
In this section, we discuss the limitations and challenges associated with our work and establish links to other related studies. While our approach has demonstrated several advantages, it also presents unresolved issues and limitations.

\paragraph{Applicability to Vision-Based Scenarios}
For instance, in our case, empirical entropy is estimated by modeling the distribution of achieved goals using Gaussian models, as proposed by OMEGA \citep{pitis2020maximum}. However, this method is not directly applicable to purely vision-based scenarios. For these contexts, embedding-based methods for intra-episode novelty \citep{savinov2018episodic, badia2020never} may offer a viable solution. Here, the primary focus shifts from local entropy changes to variations in distances that measure the dispersion of learned embeddings within a contextual horizon.

\paragraph{Context-Dependent Skill Horizon}
In our study, the skill horizon \( k \) is fixed. Ideally, the optimal skill horizon should correlate with the extent of environmental information that can be perceived. On one hand, if an agent has only enough structural information to make precise estimations about local exploration changes over a range shorter than \( k \), it might struggle to accurately forecast these changes for the subsequent \( k \) steps. On the other hand, if the skill horizon is too narrow relative to the breadth of perceived information, the agent would be able to leverage only a small portion of this available information. Thus, the optimal \( k \) should be context-dependent, and determining how to derive such a \( k \) will be a key focus of future work.

\paragraph{Adapting Entropy Estimation to Account for Sequential Skill Execution}
Considering pre-set skills, rather than simply evaluating local entropy changes from executing a single skill, we propose exploring the entropy changes resulting from the sequential execution of various skills in future studies. This prospective adaptation aims to account for more complex behaviors using the same skill sets by assessing local entropy changes in a forward-thinking manner, considering not just the current skill but also potential future skills. Additionally, it could enable us to leverage structural information more effectively and intricately.

\vspace{1em}

Overall, there are several potential directions to further enhance the exploration efficiency of GEASD in complex environments, which will be explored in future work.

\section{Conclusion}
In this paper, we propose a novel learning objective that enhances the overall entropy of achieved goals by optimizing local entropy within a contextual horizon. This optimization enables the agent to exploit structural information to derive behavior patterns that not only enhance local entropy but may also generalize to structurally familiar but previously unseen scenarios.

We demonstrate our ideas through an adaptive skill distribution based on a predefined skill set. Utilizing theoretical analysis with certain assumptions, we justify the adoption of a Boltzmann distribution for skill distribution, which is grounded on skill value functions that estimate the local entropy changes induced by skill execution. Our experiments on challenging sparse-reward, long-horizon tasks show that augmenting goal exploration with skills from our adaptive distribution significantly improves exploration efficiency. Furthermore, in scenarios where the skills are adequate for task completion, the skill policy based on the adaptive distribution achieves substantially better initial exploration progress in new environments, a feature not present in learned goal-conditioned policies or uniform skill policies. Our ablation study highlights the benefits of dynamic temperature settings in the skill distribution derivation and underscores the significant role of action history in this context. The results indicate that leveraging structural information to enhance local entropy is a viable approach to efficiently tackle sparse-reward and long-horizon tasks.

Additionally, we discuss potential areas for future research, including applications to vision-based tasks, exploring optimal context-dependent skill horizons, and  adapting the entropy estimations to account for sequential skill execution, as outlined in Section~\ref{sect:discussion}.

In conclusion, our approach to goal exploration via adaptive skill distribution (GEASD) advances the field of goal-conditioned reinforcement learning. We believe our findings pave the way for novel exploration strategies that effectively tackle challenging tasks characterized by sparse rewards and long horizons, particularly those involving repeated structures.

\appendix
\section*{Appendix}
\setcounter{proposition}{0}

\numberwithin{equation}{section}
\counterwithin{figure}{section}
\counterwithin{table}{section}
\renewcommand{\thefigure}{\thesection\arabic{figure}}
\renewcommand{\thetable}{\thesection\arabic{table}}
\renewcommand{\theequation}{\thesection\arabic{equation}} 
\section{Proof of Propositions} \label{appendix:proof_propositions}
In this appendix, we provide proofs for the propositions formulated in Section~\ref{sect:theoretical_analysis} of the main
text.

\begin{proposition} \label{prop:boltzamann_distribution_with_proof}
Consider a set of skills denoted by \( \mathcal{Z} \), where each skill \( \bm{z}_i \in \mathcal{Z} \) uniquely covers a portion of the achieved goals, thereby ensuring \( H(\mathcal{Z}|s_t, \bm{\Phi}(h^k_{t+k})) = 0 \). The optimal skill distribution, conditioned on the historical trajectory \( h^C_t \) and aiming to maximize the entropy \( H(\bm{\Phi}(h^C_{t+k})|h^C_t) \), conforms to a form of the Boltzmann distribution. This distribution is expressed as
\begin{equation}
p^*(\bm{z}|h^C_t) = \frac{e^{-E^*(\bm{z})}}{\sum_i e^{-E^*(\bm{z}_i)}} = \frac{e^{\Delta H(\bm{\Phi}(h^C_{t+k})|h^C_t, \bm{z})}}{\sum_i e^{\Delta 
 H(\bm{\Phi}(h^C_{t+k})|h^C_t, \bm{z}_i)}},  \nonumber
\end{equation}
where the energy function \(E^*(\bm{z})\) associated with skill \( z \) is quantified by the resulting entropy gain after execution, \( - \Delta H(\bm{\Phi}(h^C_{t+k})|h^C_t, \bm{z}) \).
\end{proposition}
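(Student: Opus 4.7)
The plan is to write $H(\bm{\Phi}(h^C_{t+k})|h^C_t)$ as the sum of two parts by introducing the latent skill $\bm{z}$ via the entropy chain rule, and then solve the resulting concave maximum-entropy program on the simplex with a single Lagrange multiplier.

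First, I would apply the chain rule to $\bm{z}$ and $\bm{\Phi}(h^C_{t+k})$ conditioned on $h^C_t$:
\begin{equation}
H(\bm{\Phi}(h^C_{t+k})|h^C_t) + H(\bm{z}|\bm{\Phi}(h^C_{t+k}), h^C_t) = H(\bm{z}|h^C_t) + H(\bm{\Phi}(h^C_{t+k})|\bm{z}, h^C_t). \nonumber
\end{equation}
Next I would use the standing assumption $H(\mathcal{Z}|s_t, \bm{\Phi}(h^k_{t+k})) = 0$. Since $h^C_t$ includes $s_t$ as its terminal state and $\bm{\Phi}(h^C_{t+k})$ contains $\bm{\Phi}(h^k_{t+k})$ as a suffix, conditioning on the pair $(h^C_t, \bm{\Phi}(h^C_{t+k}))$ is at least as informative as conditioning on $(s_t, \bm{\Phi}(h^k_{t+k}))$, so by monotonicity of conditional entropy $H(\bm{z}|\bm{\Phi}(h^C_{t+k}), h^C_t) = 0$. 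The chain rule then collapses to $H(\bm{\Phi}(h^C_{t+k})|h^C_t) = H(\bm{z}|h^C_t) + H(\bm{\Phi}(h^C_{t+k})|\bm{z}, h^C_t)$.

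Third, because $\bm{\Phi}(h^C_t)$ is a deterministic function of $h^C_t$, I would note $H(\bm{\Phi}(h^C_t)|h^C_t, \bm{z}) = H(\bm{\Phi}(h^C_t)|h^C_t) = 0$, so $H(\bm{\Phi}(h^C_{t+k})|h^C_t, \bm{z}) = \Delta H(\bm{\Phi}(h^C_{t+k})|h^C_t, \bm{z})$ in the notation of Eq.~(\ref{eq:entropy_gain_simp}). Substituting, the objective to be maximized over $p(\bm{z}|h^C_t)$ becomes
\begin{equation}
\mathcal{L}(p) = -\sum_{\bm{z}} p(\bm{z}|h^C_t) \log p(\bm{z}|h^C_t) + \sum_{\bm{z}} p(\bm{z}|h^C_t)\, \Delta H(\bm{\Phi}(h^C_{t+k})|h^C_t, \bm{z}), \nonumber
\end{equation}
which is strictly concave in $p$ on the probability simplex.

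Finally, I would introduce a single Lagrange multiplier $\lambda$ for the constraint $\sum_{\bm{z}} p(\bm{z}|h^C_t) = 1$, set the gradient to zero to obtain $-\log p(\bm{z}|h^C_t) - 1 + \Delta H(\bm{\Phi}(h^C_{t+k})|h^C_t, \bm{z}) - \lambda = 0$, exponentiate, and normalize to recover the claimed Boltzmann form with energy $E^*(\bm{z}) = -\Delta H(\bm{\Phi}(h^C_{t+k})|h^C_t, \bm{z})$. Strict concavity guarantees uniqueness of this stationary point. The main obstacle I anticipate is the careful justification of the step $H(\bm{z}|\bm{\Phi}(h^C_{t+k}), h^C_t) = 0$ from the assumption stated at the level of $(s_t, \bm{\Phi}(h^k_{t+k}))$; everything else reduces to a textbook maximum-entropy computation.
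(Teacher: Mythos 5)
Your proposal is correct and follows essentially the same route as the paper's proof: the same chain-rule decomposition of \( H(\bm{\Phi}(h^C_{t+k})|h^C_t) \) into \( H(\mathcal{Z}|h^C_t) + H(\bm{\Phi}(h^C_{t+k})|h^C_t,\mathcal{Z}) \) after using the assumption to annihilate \( H(\mathcal{Z}|h^C_t,\bm{\Phi}(h^C_{t+k})) \), followed by the same Lagrange-multiplier maximization of the resulting concave objective over the simplex. The one discrepancy is your claim that \( H(\bm{\Phi}(h^C_t)|h^C_t)=0 \): in the paper's semantics this quantity is the (generally nonzero) local entropy of the empirical distribution of achieved goals within the context window, not the Shannon entropy of a deterministic function of the conditioning variable; the paper instead passes from \( H(\bm{\Phi}(h^C_{t+k})|h^C_t,\bm{z}) \) to \( \Delta H(\bm{\Phi}(h^C_{t+k})|h^C_t,\bm{z}) \) by observing that this subtracted term is independent of \( \bm{z} \) and cancels in the softmax normalization, so your final distribution is unaffected even though the justification for that last step is not the paper's.
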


\begin{proof}
We begin by analyzing the entropy of the updated goal space conditioned on the historical trajectory \( h^C_t \):
\begin{align}
    H(\bm{\Phi}(h^C_{t+k})|h^C_t) &=  H(\bm{\Phi}(h^C_{t+k})|h^C_t, \mathcal{Z}) + H(\mathcal{Z}|h^C_t) - H(\mathcal{Z}|h^C_t, \bm{\Phi}(h^C_{t+k})) \label{eq:entropy_history_conditioned} \\
    &= H(\bm{\Phi}(h^C_{t+k})|h^C_t, \mathcal{Z}) + H(\mathcal{Z}|h^C_t) - H(\mathcal{Z}|h^C_t, \bm{\Phi}(h^k_{t+k}))  \label{eq:simplify_to_explore} \\
    &= H(\bm{\Phi}(h^C_{t+k})|h^C_t, \mathcal{Z}) + H(\mathcal{Z}|h^C_t) - H(\mathcal{Z}|s_t, \bm{\Phi}(h^k_{t+k})) \label{eq:simplify_to_state} \\
    &= \sum_i p(\bm{z}_i|h^C_t) \left( H(\bm{\Phi}(h^C_{t+k})|h^C_t, \bm{z}_i) - \log p(\bm{z}_i|h^C_t) \right) \label{eq:entropy_history_expansion}
\end{align}

The transition from Eq.~(\ref{eq:entropy_history_conditioned}) to Eq.~(\ref{eq:simplify_to_explore}) relies on the fact that information pertaining to \( \bm{\Phi}(h^C_t) \) is encapsulated within \( h^C_t \), thereby allowing for a simplification of \( \bm{\Phi}(h^C_{t+k}) \) to \( \bm{\Phi}(h^k_{t+k}) \). Furthermore, the progression from Eq.~(\ref{eq:simplify_to_explore}) to Eq.~(\ref{eq:simplify_to_state}) is based on the fact that \( s_t \), as included in \( h^C_t \), along with \( \bm{\Phi}(h^k_{t+k}) \), together suffice to determine \( \mathcal{Z} \), as indicated by \( H(\mathcal{Z}|s_t, \bm{\Phi}(h^k_{t+k})) = 0 \).
Additionally, historical information preceding time step \( t \) does not introduce further uncertainties into skill selection because the skill execution, characterized by \( z \) and represented as \( a_t \sim \sigma(\cdot|\psi(s_t), \bm{z}) \), operates independently of past history. Ultimately, Eq.~(\ref{eq:entropy_history_expansion}) represents our primary optimization goal. With the constraint \( \sum_i p(\bm{z}_i|h^C_t) = 1 \) and the introduction of a Lagrangian multiplier \( \lambda \) (\(\lambda > 0\)), we formulate the optimization problem as:
\begin{equation}
    \mathcal{L} = \sum_i p(\bm{z}_i|h^C_t) \Bigl( H(\bm{\Phi}(h^C_{t+k})|h^C_t, \bm{z}_i) - \log p(\bm{z}_i|h^C_t) \Bigr) - \lambda \Bigl( \sum_i p(\bm{z}_i|h^C_t) - 1 \Bigr)\nonumber.
\end{equation}

Differentiating with respect to \( p(\bm{z}_i|h^C_t) \) gives:
\begin{align}
  H(\bm{\Phi}(h^C_{t+k})|h^C_t, \bm{z}_i) - \log p^*(\bm{z}_i|h^C_t) - 1 &= \lambda, \nonumber\\
  p^*(\bm{z}_i|h^C_t) &= e^{H(\bm{\Phi}(h^C_{t+k})|h^C_t, \bm{z}_i) - \lambda - 1} \nonumber.
\end{align}
Applying the normalization condition \( \sum_i p^*(\bm{z}_i|h^C_t) = 1 \), we find:
\begin{equation}
    \lambda = \log \left(\sum_i e^{H(\bm{\Phi}(h^C_{t+k})|h^C_t, \bm{z}_i) - 1}\right) \nonumber.
\end{equation}
Therefore, the optimal skill distribution \( p(\bm{z}_i|h^C_t) \) is derived as:
\begin{align}
    p^*(\bm{z}_i|h^C_t) &= \frac{e^{H(\bm{\Phi}(h^C_{t+k})|h^C_t, \bm{z}_i) - 1}}{\sum_i e^{H(\bm{\Phi}(h^C_{t+k})|h^C_t, \bm{z}_i) - 1}} \nonumber\\
    &= \frac{e^{H(\bm{\Phi}(h^C_{t+k})|h^C_t, \bm{z}_i) -H(\bm{\Phi}(h^C_t)|h^C_t) }}{\sum_i e^{H(\bm{\Phi}(h^C_{t+k})|h^C_t, \bm{z}_i)- H(\bm{\Phi}(h^C_t)|h^C_t) }}\nonumber\\
    &= \frac{e^{\Delta H(\bm{\Phi}(h^C_{t+k})|h^C_t, \bm{z})}}{\sum_i e^{\Delta 
 H(\bm{\Phi}(h^C_{t+k})|h^C_t, \bm{z}_i)}}\nonumber.
\end{align}
\end{proof}

\begin{proposition}\label{prop:expect_lowerbound_with_proof}
The local entropy change, \(\Delta H(\bm{\Phi}(h^C_{t+k})|h^C_t, \bm{z})\), resulting from executing skill \(\bm{z}\), is lower bounded by the expected change in local entropy across each trajectory \(h^k_{t+k}\) sampled from \(p(h^k_{t+k}|s_t, \bm{z})\). This relationship is formally described by the following inequality:
\begin{equation}
    \Delta H(\bm{\Phi}(h^C_{t+k})|h^C_t, \bm{z}) \geq \mathbb{E}_{h^k_{t+k} \sim p(\cdot|s_t, \bm{z})} \left[ H(\bm{\Phi}(h^C_{t+k})|h^C_{t+k}) - H(\bm{\Phi}(h^C_t)|h^C_t) \right]. \nonumber
\end{equation}
\end{proposition}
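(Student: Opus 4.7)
The plan is to reduce Proposition~2 to a single invocation of Jensen's inequality (equivalently, ``conditioning reduces entropy'') applied to the concave Shannon entropy functional. First, I would simplify both sides by pulling out the common constant. By Eq.~(8), the left-hand side expands as
\begin{equation}
  \Delta H(\bm{\Phi}(h^C_{t+k})|h^C_t, \bm{z}) = H(\bm{\Phi}(h^C_{t+k})|h^C_t, \bm{z}) - H(\bm{\Phi}(h^C_t)|h^C_t),\nonumber
\end{equation}
and the right-hand side is $\mathbb{E}_{h^k_{t+k}}[H(\bm{\Phi}(h^C_{t+k})|h^C_{t+k})] - H(\bm{\Phi}(h^C_t)|h^C_t)$ since $H(\bm{\Phi}(h^C_t)|h^C_t)$ does not depend on the sampled future. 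Subtracting the common term, it suffices to prove
\begin{equation}
  H(\bm{\Phi}(h^C_{t+k})|h^C_t, \bm{z}) \;\geq\; \mathbb{E}_{h^k_{t+k}\sim p(\cdot|s_t,\bm{z})}\bigl[H(\bm{\Phi}(h^C_{t+k})|h^C_{t+k})\bigr].\nonumber
\end{equation}

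Next, I would unpack both sides as entropies of explicit distributions over the goal space, using Eq.~(3) specialized to the point-mass skill distribution $D = \delta_{\bm{z}}$. Writing $\hat{p}_{h^C_{t+k}}(g) = \frac{1}{C}|\{g' \in \bm{\Phi}(h^{C-k}_t \oplus h^k_{t+k}) : g' = g\}|$ for the empirical distribution of achieved goals in a realised future context, Eq.~(3) gives
\begin{equation}
  p_{\bm{z}}(g|h^C_t) = \mathbb{E}_{h^k_{t+k}\sim p(\cdot|s_t,\bm{z})}\bigl[\hat{p}_{h^C_{t+k}}(g)\bigr].\nonumber
\end{equation}
Thus the left-hand side is the entropy of the mixture distribution $\mathbb{E}[\hat{p}_{h^C_{t+k}}]$, while the right-hand side is the expected entropy of the random empirical distribution $\hat{p}_{h^C_{t+k}}$.

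The final step is then a one-line application of Jensen's inequality to the concave entropy functional $p \mapsto H(p)$:
\begin{equation}
  H\bigl(\mathbb{E}_{h^k_{t+k}}[\hat{p}_{h^C_{t+k}}]\bigr) \;\geq\; \mathbb{E}_{h^k_{t+k}}\bigl[H(\hat{p}_{h^C_{t+k}})\bigr].\nonumber
\end{equation}
An equivalent route, which I might state as a remark, is to introduce a joint model in which a goal index $G$ is drawn uniformly from the $C$ positions of the concatenated context $h^{C-k}_t \oplus H^k$ with $H^k$ the random future trajectory; the claim is then the classical inequality $H(G) \geq H(G \mid H^k)$.

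The main obstacle I anticipate is not the inequality itself but disentangling the paper's notational overloading of $H(\cdot\,|\,\cdot)$. On the left, $H(\bm{\Phi}(h^C_{t+k})|h^C_t,\bm{z})$ denotes the entropy of a distribution obtained by marginalising over the stochastic future, whereas on the right $H(\bm{\Phi}(h^C_{t+k})|h^C_{t+k})$ denotes the Shannon entropy of the realised empirical goal distribution given a specific future. Making this interpretive distinction explicit via Eq.~(3) is what converts the claim into a standard concavity statement; once that bookkeeping is done, nothing more than concavity of $H$ is needed.
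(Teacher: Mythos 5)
Your proposal is correct and follows essentially the same route as the paper's proof: both identify $p(g|h^C_t,\bm{z})$ as the mixture $\mathbb{E}_{h^k_{t+k}\sim p(\cdot|s_t,\bm{z})}[p(g|h^C_{t+k})]$ and then apply Jensen's inequality via the concavity of the entropy functional, after which the common term $H(\bm{\Phi}(h^C_t)|h^C_t)$ cancels. Your explicit bookkeeping of the overloaded conditional-entropy notation and the remark recasting the step as ``conditioning reduces entropy'' are faithful restatements of, not departures from, the paper's argument.
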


\begin{proof}
The distribution of goals within the contextual horizon \(C\), subsequent to the trajectory segment \(h^k_{t+k}\) and the execution of skill \(\bm{z}\) from the historical context \(h^C_t\), is represented by \(p(g|h^C_{t+k})\) and \(p(g|h^C_t, \bm{z})\), respectively. These distributions are articulated as follows:
\begin{align}
    p(g|h^C_{t+k}) &= \frac{k \cdot p(g| h^k_{t+k}) + (C-k) \cdot p(g|h^{C-k}_t)}{C}, \nonumber\\
    p(g|h^C_t, \bm{z}) &= \mathbb{E}_{h^k_{t+k} \sim p(\cdot|s_t, \bm{z})} \left[ p(g|h^C_{t+k}) \right]. \nonumber
\end{align}
Upon decomposing \(H(\bm{\Phi}(h^C_{t+k})|h^C_t, \bm{z})\), we obtain:
\begin{align}
  H(\bm{\Phi}(h^C_{t+k})|h^C_t, \bm{z}) &= -\sum_g p(g|h^C_t, \bm{z}) \log p(g|h^C_t, \bm{z}) \label{eq:entropy_decomposition}\\
                                 &\geq -\mathbb{E}_{h^k_{t+k} \sim p(\cdot|s_t, \bm{z})}  \left[\sum_g p(g|h^C_{t+k}) \log p(g|h^C_{t+k})\right] \label{eq:entropy_lower_bound} \\
                                 &= \mathbb{E}_{h^k_{t+k} \sim p(\cdot|s_t, \bm{z})}  \left[ H(\bm{\Phi}(h^C_{t+k})|h^C_{t+k}) \right] \label{eq:entropy_lower_bound_simplification}.
\end{align}
The derivation from Eq.~(\ref{eq:entropy_decomposition}) to Eq.~(\ref{eq:entropy_lower_bound}) leverages Jensen's inequality and the inherent concavity of the entropy function. Consequently, by integrating Eqs.~(\ref{eq:entropy_gain_simp}) and (\ref{eq:entropy_lower_bound_simplification}), it is straightforward to deduce:
\begin{equation}
    \Delta H(\bm{\Phi}(h^C_{t+k})|h^C_t, \bm{z}) \geq \mathbb{E}_{h^k_{t+k} \sim p(\cdot|s_t, \bm{z})} \left[ H(\bm{\Phi}(h^C_{t+k})|h^C_{t+k}) - H(\bm{\Phi}(h^C_t)|h^C_t)\right]. \nonumber
\end{equation}
\end{proof}

\section{Comprehensive Details of Methods}
\label{sect:details}
\subsection{Technical Details}
\subsubsection{OMEGA}
To ensure that the agent effectively reaches the desired goals from the distribution \( p_{\text{dg}} \), \cite{pitis2020maximum} aims to minimize the KL divergence between \( p_{\text{dg}} \) and the distribution of achieved goals \( p_{\text{ag}} \). This leads to the following objective:
\begin{equation}
  J(p_{\text{ag}}) = D_{KL}(p_{\text{dg}}||p_{\text{ag}}). \nonumber
\end{equation}

However, this optimization objective is not finite when \( p_{\text{ag}} \) and \( p_{\text{dg}} \) do not initially overlap. To address this issue, the \textit{Maximum Entropy Goal Achievement} (MEGA) modifies the objective by expanding the support of achieved goals, \( \text{supp}(p_{\text{ag}}) \). MEGA specifically aims to maximize the entropy of the achieved goals as shown in the following objective:
\begin{equation}
  J_{\text{MEGA}}(p_{\text{ag}}) = D_{KL}(\mathcal{U}(\text{supp}(p_{\text{ag}}))||p_{\text{ag}}). \nonumber
\end{equation}
Here, \( \mathcal{U}(\text{supp}(p_{\text{ag}})) \) denotes the uniform distribution over the support of \( p_{\text{ag}} \). MEGA, however, neglects the distribution of desired goals in its formulation. In contrast, OMEGA introduces a gradual transition from the uniform distribution over the support of the achieved goals to the distribution of desired goals, based on current exploration progress. This approach utilizes a hyper-parameter \( \alpha \) to gauge this progress, calculated as follows:
\begin{equation}
  \alpha = \frac{1}{\max(b + D_{KL}(p_{\text{dg}} || p_{\text{ag}}), 1)},
  \label{eq:omega-alpha-calculation}
\end{equation}
where \( b \leq 1 \). Using this parameter, OMEGA defines a mixture distribution \( p_{\alpha} = \alpha p_{\text{dg}} + (1-\alpha)\mathcal{U}(\text{supp}(p_{\text{ag}})) \) as the target for optimizing the KL divergence:
\begin{equation}
  J_{\text{OMEGA}}(p_{\alpha}) = D_{KL}(p_{\alpha}||p_{\text{ag}}). \nonumber
\end{equation}
As \( \alpha \) progresses from nearly zero to one, the target of the KL divergence shifts from predominantly \( \mathcal{U}(\text{supp}(p_{\text{ag}})) \) to \( p_{\text{dg}} \) entirely. To optimize the OMEGA objective, \cite{pitis2020maximum} has demonstrated that the optimal sub-goal selection strategy involves sampling the sub-goal from the desired goals at an \( \alpha \)-probability and from the achieved goals at a \( 1-\alpha \)-probability as follows:
\begin{equation}
  \hat{g} = \argmin_{\hat{g}\in\mathcal{B}} p_{\text{ag}}(\hat{g}).\label{eq:omega-min-density} \nonumber
\end{equation}

\subsubsection{GEAPS} \label{sect:geaps_technical_details}
GEAPS, introduced by \citep{wu2024goal}, presents a goal-exploration framework that includes OMEGA as a specific example. This framework categorizes the sub-goal selection-based exploration methods into two stages: goal pursuit and goal exploration. Goal pursuit aims to reach the selected sub-goal, while goal exploration initiates after the achievement of the sub-goal to conduct goal-independent exploration. As discussed in Section~\ref{sect:exploration_obj}, GEAPS aims to maximize the entropy of $H(\bm{\Phi}(h^k_{t+k}))$, specifically during the goal exploration stage where the exploration behaviors are not tied to any goals.

To achieve this, GEAPS focuses on skill learning to capture the compositions of goal-transition patterns shared between the pre-training tasks and the downstream tasks. The goal-transition patterns are defined as effective transition behavior patterns between two achieved goals, circumventing ineffective actions in the environments, thereby promoting the entropy of achieved goals within the skill horizon.

The agent is expected to cover as many goals as possible in the pre-training environments, aiming to capture more goal-transition patterns so that they can act effectively even when some are missing in downstream task scenarios. The optimization objective is to learn distinct skills while also optimizing the coverage of achieved goals by executing skills from a uniform distribution. To learn such a discrete set of skills, the skill learning objective to maximize is set as follows:
\begin{equation}
J(\mathcal{G}, \mathcal{Z}) = \argmax_{\mathcal{Z}}H(\mathcal{G}) = \argmax_{\mathcal{Z}}\left[I(\mathcal{G}, \mathcal{Z}) + H(\mathcal{G}|\mathcal{Z})\right]. \nonumber
\end{equation}
To optimize the objective, GEAPS proposes intrinsic rewards for the skill characterized by $\bm{z}$ at state $s_t$ as follows:
\begin{equation}
r_z(s_t) =
\underbrace{\log q(\bm{z} \mid \phi(s_t)) - \log p(\bm{z}) \vphantom{\left(\max_{\hat{g}} q(\hat{g} \mid \bm{z}\right)}}_{\mathclap{\text{optimize } I(\mathcal{G}, \mathcal{Z})}}
+ \beta
\underbrace{\left( \log \max_{\hat{g}} q(\hat{g} \mid \bm{z}) - \log q(\phi(s_t) \mid \bm{z}) \right)}_{\mathclap{\text{optimize } H(\mathcal{G}|\mathcal{Z})}}, \nonumber
\end{equation}
where the right-hand side can be decomposed into two parts as indicated by the underbraces: the first part aims to optimize the mutual information term $I(\mathcal{G}, \mathcal{Z})$, and the second part aims to optimize the conditional entropy term $H(\mathcal{G}|\mathcal{Z})$. Here, $\beta$ manages the trade-off between the two parts of the rewards.

\subsection{Implementation Details}
\subsubsection{DDPG}
All goal-conditioned policies are based on the Deep Deterministic Policy Gradient (DDPG) framework \citep{lillicrap2015continuous}. The specific hyperparameters employed in our DDPG implementation are listed in Table~\ref{table:ddpg-hyperparameters}.

\begin{table}[h!]
\caption{Hyperparameters used in the DDPG implementation.}
\centering
\begin{tabular}{c | c}
 \hline
 \textbf{Hyperparameter} & \textbf{Value} \\ [0.5ex]
 \hline\hline
batch size & 2000 (1000 for \texttt{AntMaze-U}) \\
actor learning rate & 0.001 \\
critic learning rate & 0.001 \\
optimizer & Adam \citep{kingma2014adam} \\
activation function & GeLU \citep{hendrycks2016gaussian} \\
hidden layer sizes (actor and critic) & (512, 512, 512) \\
target network update proportion & 5\% \\
target network update frequency & every 40 steps \\
initial random data collection & 5000 steps \\
epsilon for random exploration & 0.1 \\
replay buffer size & 5,000,000 \\
discount factor & 0.98 (0.99 for \texttt{AntMaze-U}) \\  [1ex]
 \hline
\end{tabular}
\label{table:ddpg-hyperparameters}
\end{table}

\subsubsection{OMEGA}
In all our experiments, data collection for the training process is conducted simultaneously across seven instances of the same environment, each running on parallel threads. We adopt the same configurations as used in OMEGA \citep{pitis2020maximum}. The $b$ parameter in Eq.~\eqref{eq:omega-alpha-calculation} is set to -3.0. We employ a kernel density estimator (KDE) \citep{rosenblatt1956remarks} with a 0.1 bandwidth and a Gaussian kernel to estimate the density of the achieved goals. This KDE is fitted to 10,000 normalized achieved goals from the replay buffer at each optimization step. The empirical entropy metric used in the Comparative Study, as outlined in Section~\ref{sect:comp_study}, is estimated using 1,000 samples drawn from the KDE, along with their estimated probabilities.

\subsubsection{GEAPS}
In our experiments, we employ the pre-trained skills provided in the GEAPS codebase \citep{wu2024goal}. These skills uniquely utilize the agent's inner state as inputs, deliberately excluding environmental details such as wall information and absolute spatial coordinates. Consequently, while these skills are more adaptable and can generalize across various scenarios, they do not leverage the environmental structural information. The experimental settings closely follow those of OMEGA, except for an additional skill augmentation phase during the goal exploration stage.

\subsubsection{GEASD}
Building on the pre-trained skills utilized in GEAPS, we employ a Gated Recurrent Unit (GRU) \citep{cho-etal-2014-learning} to learn skill value functions. Data are sampled from the same replay buffer as used for goal-conditioned strategies, although GEASD-H requires the exclusion of non-skill execution data to effectively learn these functions. The hyperparameters for the GRU architecture and its training are comprehensively detailed in Table~\ref{table:gru-hyperparameters}.

Beyond the standard parameters listed in Table~\ref{table:gru-hyperparameters}, we have introduced a clipping ratio. This ratio determines the probability that the historical context input is randomly truncated through masking. Applying this mechanism enables the agent to effectively utilize a shorter contextual horizon, which is particularly beneficial when the available context does not span the entire horizon. Furthermore, we anticipate that the agent will leverage a smaller context for skill value function inference, especially when the full context contains unfamiliar structural information to the agent. Throughout our research, we have maintained the clipping ratio at 0.5.

\begin{table}[h!]
\caption{Hyperparameters for the GRU model to model the skill value functions.}
\centering
\begin{tabular}{c | c}
 \hline
 \textbf{Hyperparameter} & \textbf{Value} \\ [0.5ex]
 \hline\hline
batch size & 64 \\
learning rate & 1e-3 \\
optimizer & Adam \citep{kingma2014adam} \\
activation & GeLU \citep{hendrycks2016gaussian} \\
rnn hidden layer size & 128 \\
rnn output layer size & 128 \\
input hidden layer sizes & (64, 64) \\
output hidden layer sizes & (64, 64) \\
update frequency & every 4 steps \\
\hline
\end{tabular}
\label{table:gru-hyperparameters}
\end{table}

\section{Supplementary Visualizations and Ablation Studies}
\label{sect:supp_exps}
\subsection{Visualization Analysis of Policy Performance in Unseen Environments}
In this section, we visualize and analyze the test trials of each evaluated policy as discussed in Section~\ref{sect:explore_in_unseen} on the \texttt{PointMaze} series tasks. Specifically, we examine the trials of the Max Occ for each policy, which are depicted in Fig.~\ref{fig:point_test_vis}.

As outlined in Section~\ref{sect:explore_in_unseen}, the evaluated policies are categorized into two types: GC policies and skill policies. Due to the similar performance across GC policies, we use the same visualizations for all, which are collectively presented alongside the learned GC policy. Consequently, our analysis includes five rows representing different policies: the learned GC policy, uniform GC policy, GEASD-L skill policy, GEASD-H skill policy, and uniform skill policy. Each category is depicted with three figures corresponding to the tasks \texttt{Spiral}, \texttt{Spiral-C}, and \texttt{Serpentine}. These figures feature trajectories plotted in blue and decision points for actions or skill decisions marked with red dots.

We begin by analyzing the behaviors that arise from learned GC policies. In the lower-left grid of \texttt{Spiral-C}, these policies make rightward actions similar to those in the training environment, \texttt{Spiral}, resulting in the agent becoming stuck against the wall in \texttt{Spiral-C}. Likewise, in the rightmost grid of the second row in \texttt{Serpentine}, the upward movements executed by the learned GC policies—again mirroring those in \texttt{Spiral}—are obstructed by the wall. Consequently, the behaviors of the learned GC policies have not adapted to structural changes such as those involving walls; instead, they primarily respond to coordinate information consistent with these two tasks. This bias restricts the transferability of knowledge from the training environments to other scenarios with similar structural elements. In contrast, the actions under uniform policies are noisy, yet their impartial decisions facilitate broader exploration in both new environments compared to the learned GC policies.

Regarding skill policies, as detailed in Section~\ref{sect:explore_in_unseen}, they exhibit markedly improved exploration progress compared to GC policies. As illustrated in Fig.~\ref{fig:point_test_vis}, the best trial of both GEASD-H and GEASD-L skill policies on \texttt{Spiral-C}, achieving 100\% Max Occ, thoroughly reaches the center of the clockwise spiral maze. In \texttt{Serpentine}, the best trial with a 70\% occupancy ratio of the GEASD-H skill policy explores seven out of ten rows, significantly surpassing the performance of the GC policies. Furthermore, the best trial of the GEASD-L policy (Max Occ 96\%) concludes merely two grids short of the target grid, closely approaching success. For comparison, the best trial under the uniform skill policy (Max Occ 36\%) only explores three and a half rows, half as many as that achieved by GEASD-H. This comparison underscores that the benefits of GEASD skill policies over GC policies extend beyond merely introducing new skills. 

In summary, unlike GC policies, which struggle to adapt to new environments, GEASD skill policies successfully navigate these challenges by enhancing local entropy via effectively leveraging structural information. This enables them to better overcome barriers to application in unfamiliar tasks.

\begin{table}[ht]
    \centering
    \begin{tabular}{m{0.13\linewidth}|m{0.15\linewidth}m{0.15\linewidth}m{0.15\linewidth}}
        \hline
        \multicolumn{1}{c|}{Method} & \multicolumn{1}{c}{\texttt{Spiral}} & \multicolumn{1}{c}{\texttt{Spiral-C}} & \multicolumn{1}{c}{\texttt{Serpentine}} \\
        \hline
        Learned GC Policy &\includegraphics[width=\linewidth]{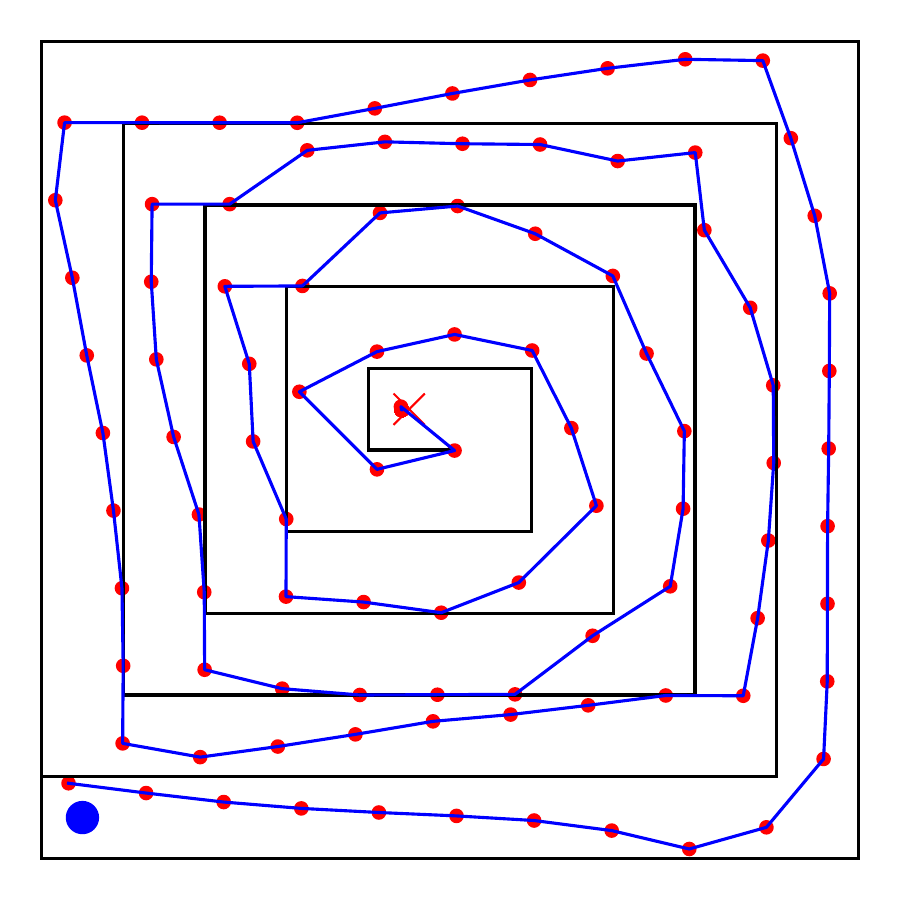} & \includegraphics[width=\linewidth]{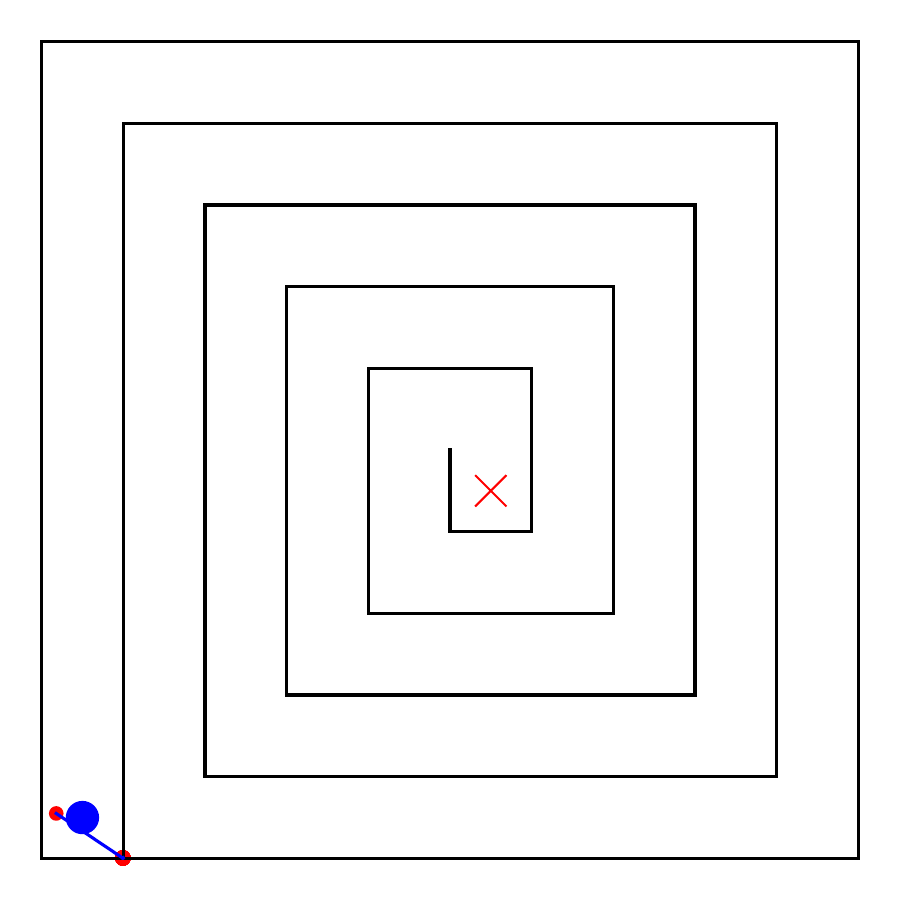} & \includegraphics[width=\linewidth]{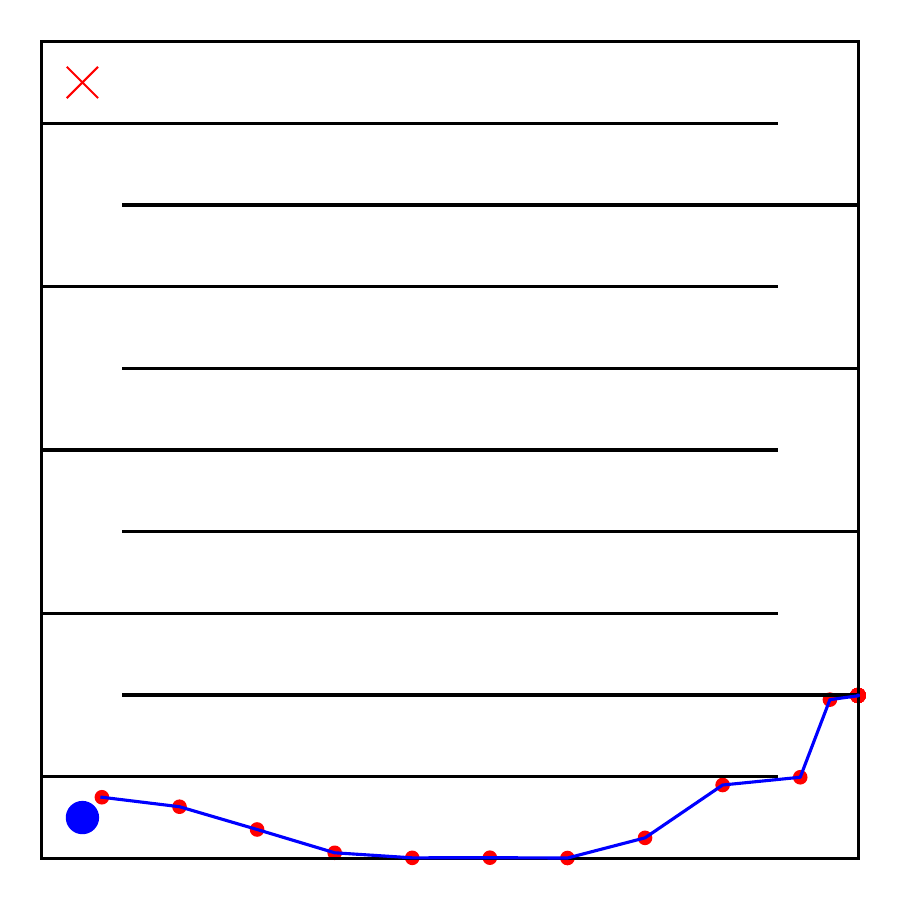} \\
        Uniform GC Policy &\includegraphics[width=\linewidth]{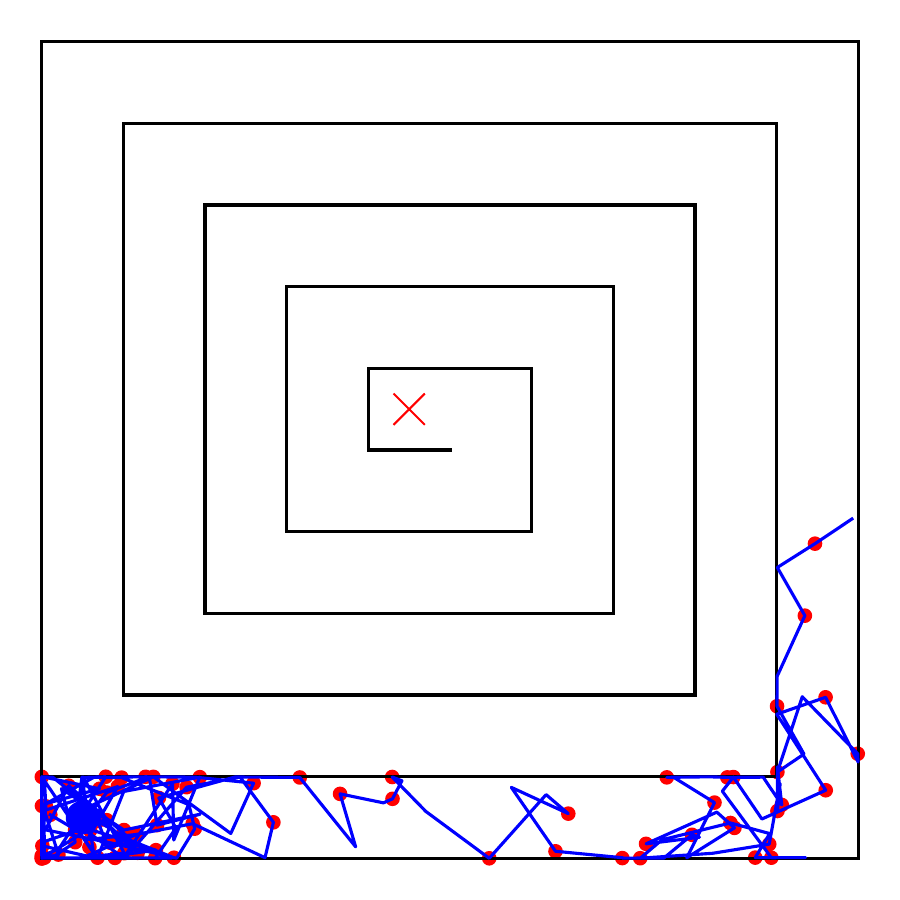} & \includegraphics[width=\linewidth]{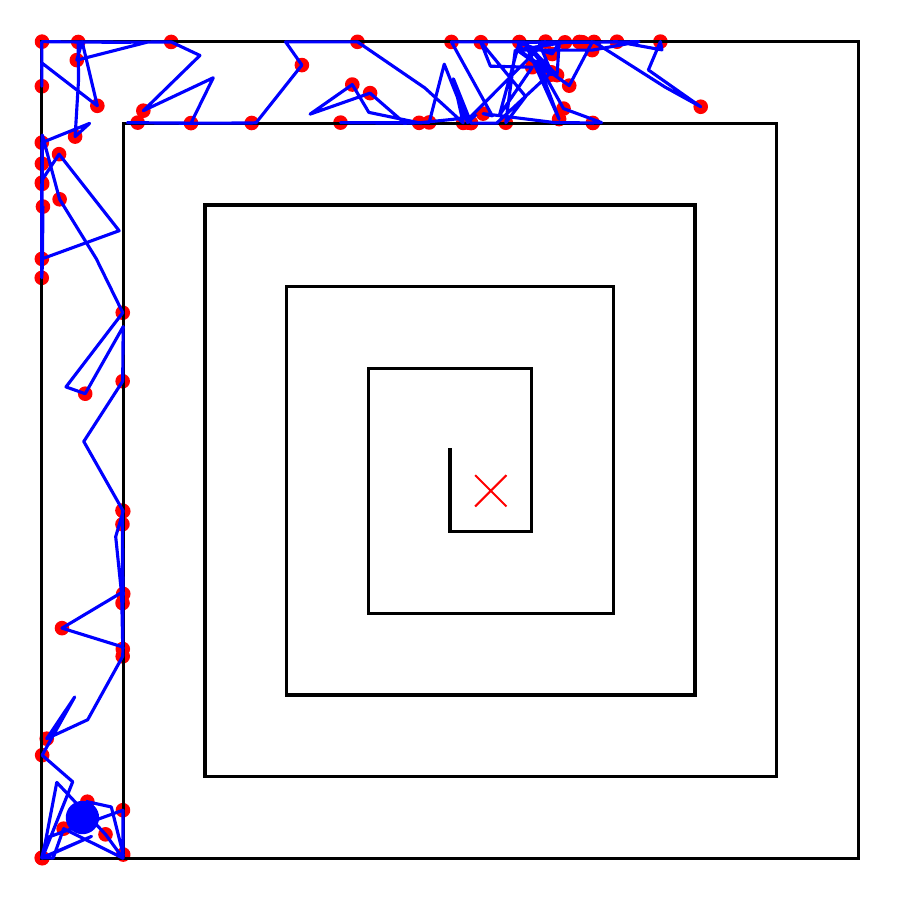} & \includegraphics[width=\linewidth]{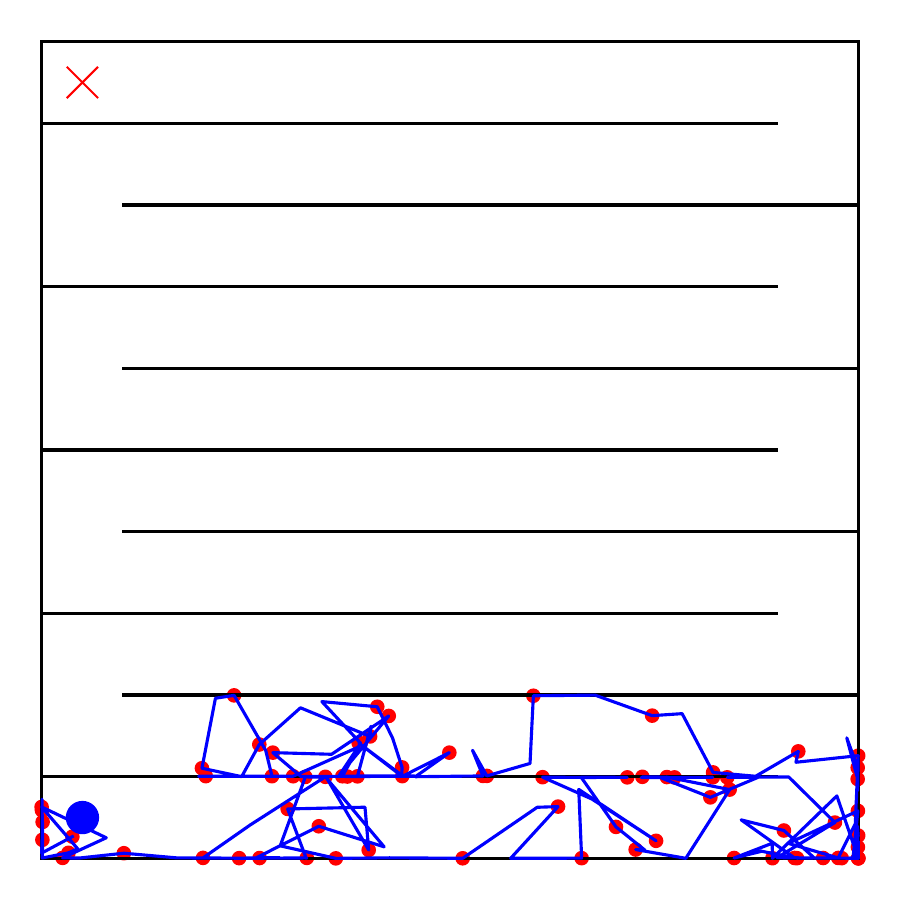} \\
        GEASD-H Skill Policy &\includegraphics[width=\linewidth]{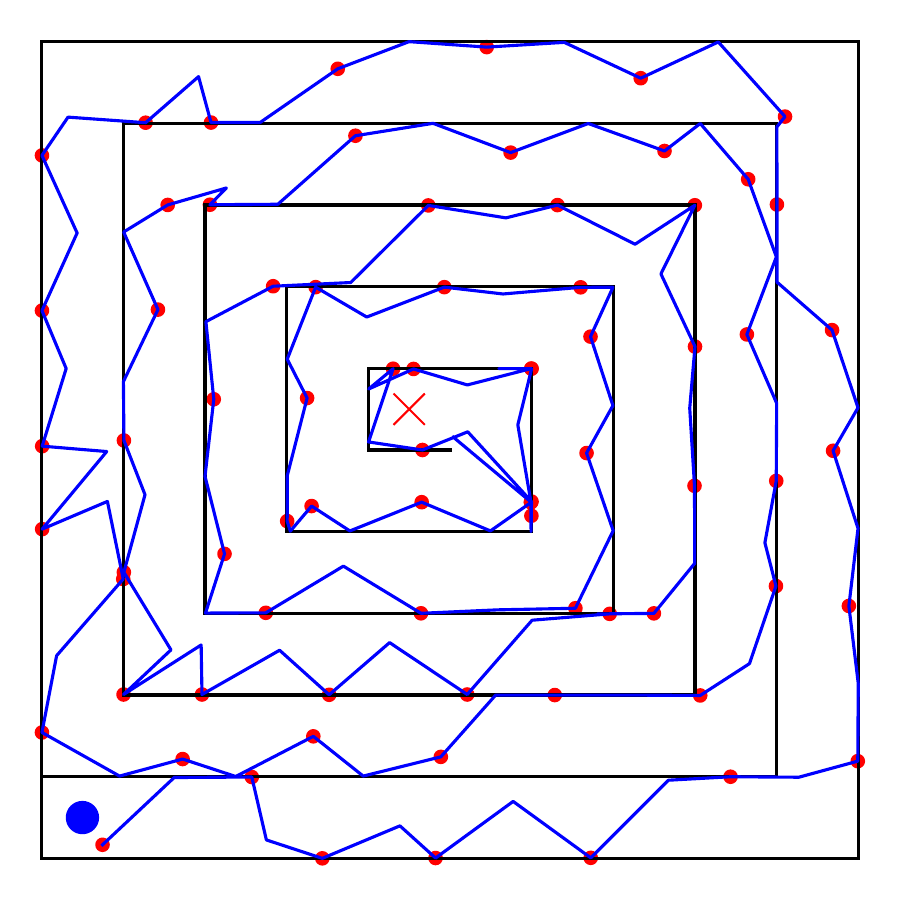} & \includegraphics[width=\linewidth]{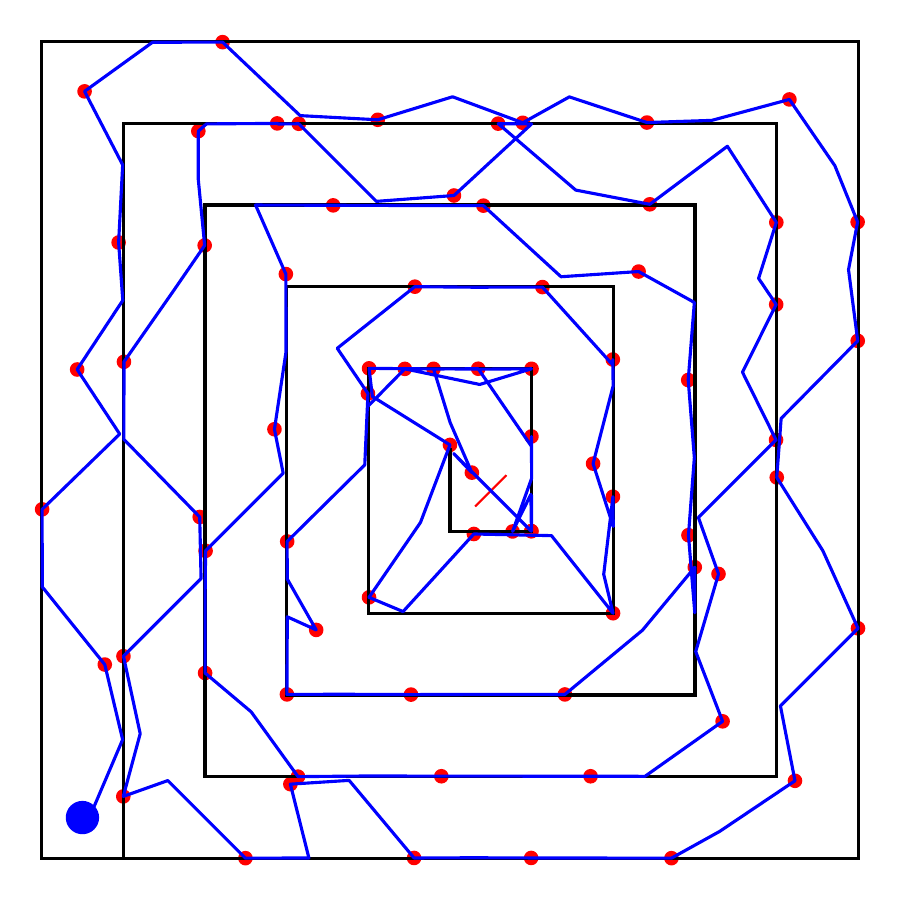} & \includegraphics[width=\linewidth]{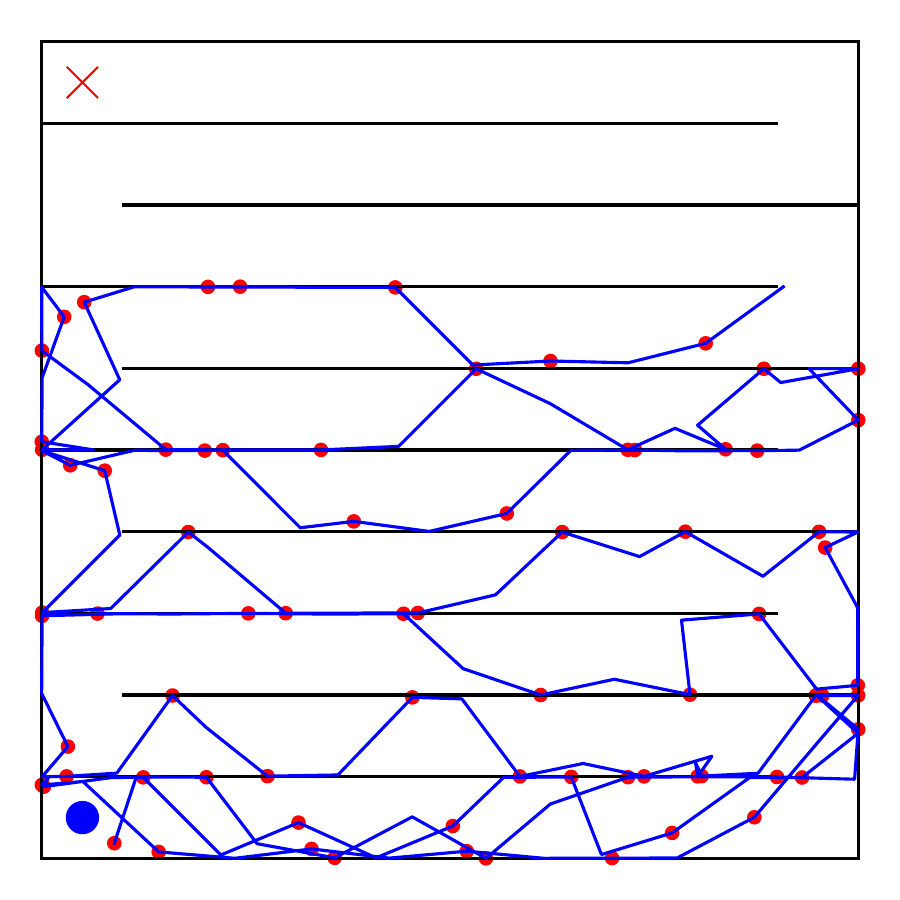} \\
        GEASD-L Skill Policy & \includegraphics[width=\linewidth]{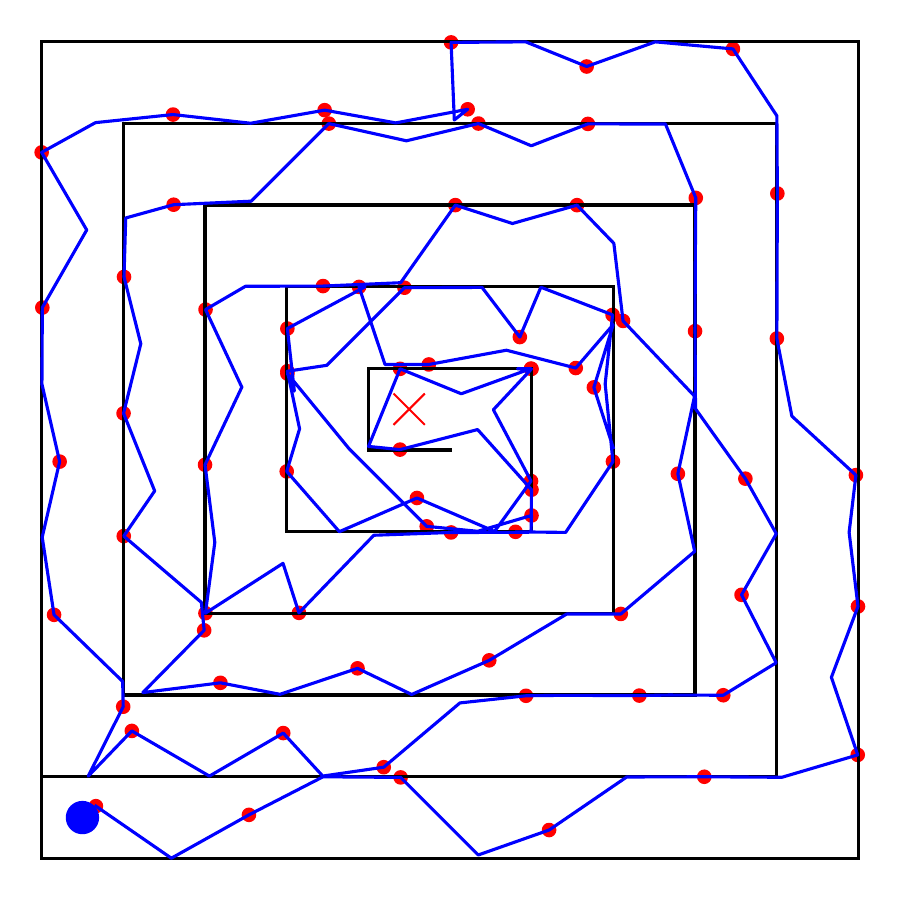} & \includegraphics[width=\linewidth]{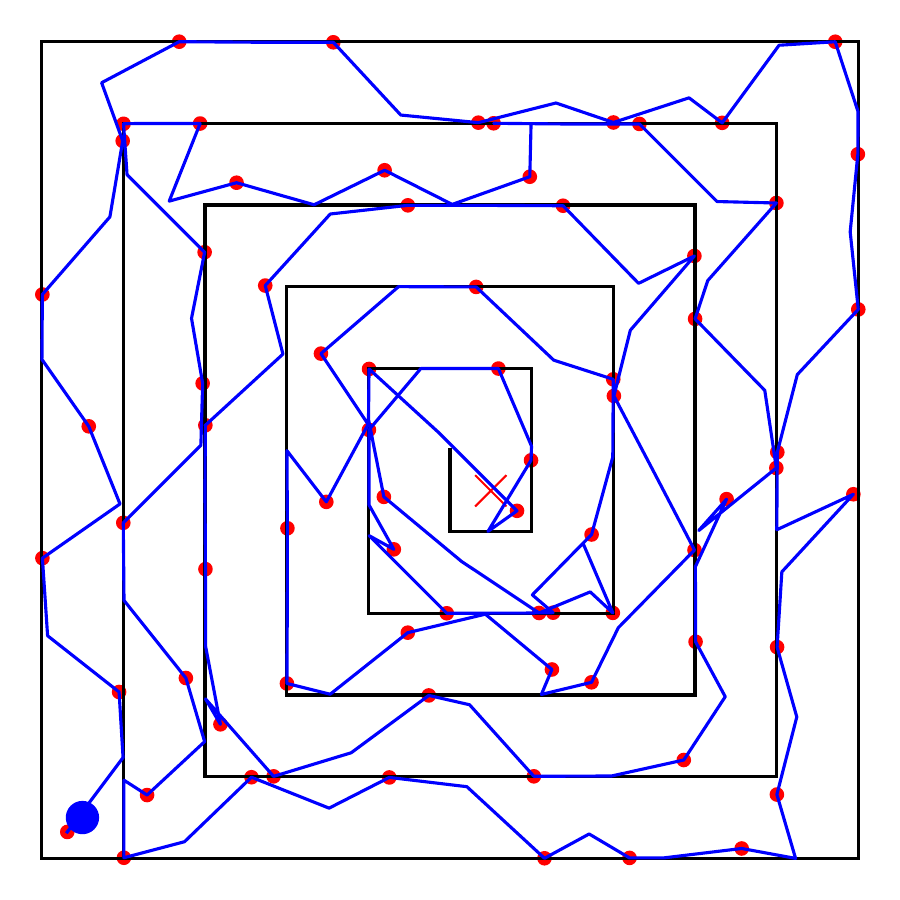} & \includegraphics[width=\linewidth]{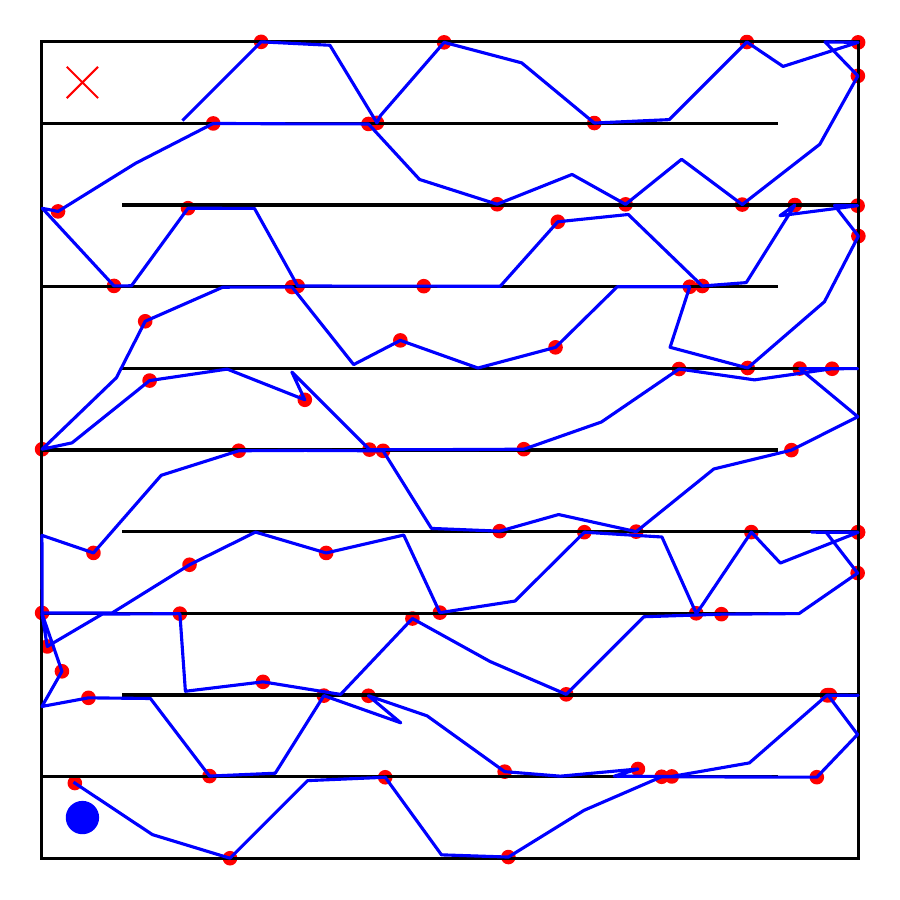} \\
        Uniform Skill Policy & \includegraphics[width=\linewidth]{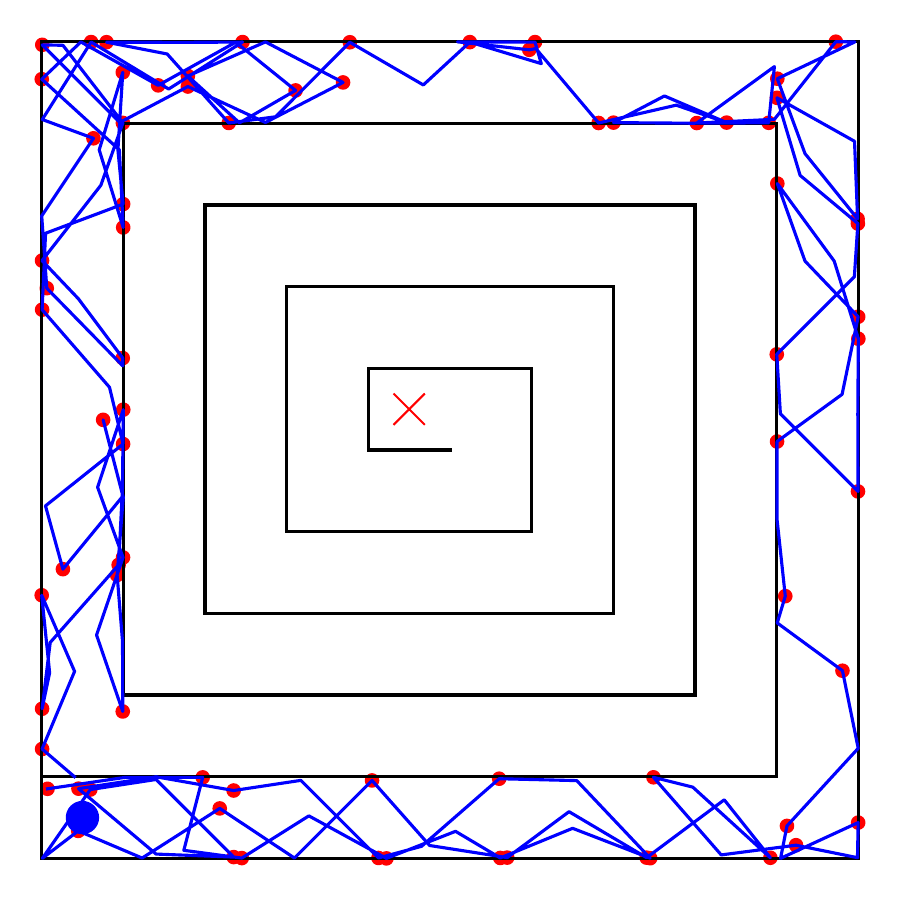} & \includegraphics[width=\linewidth]{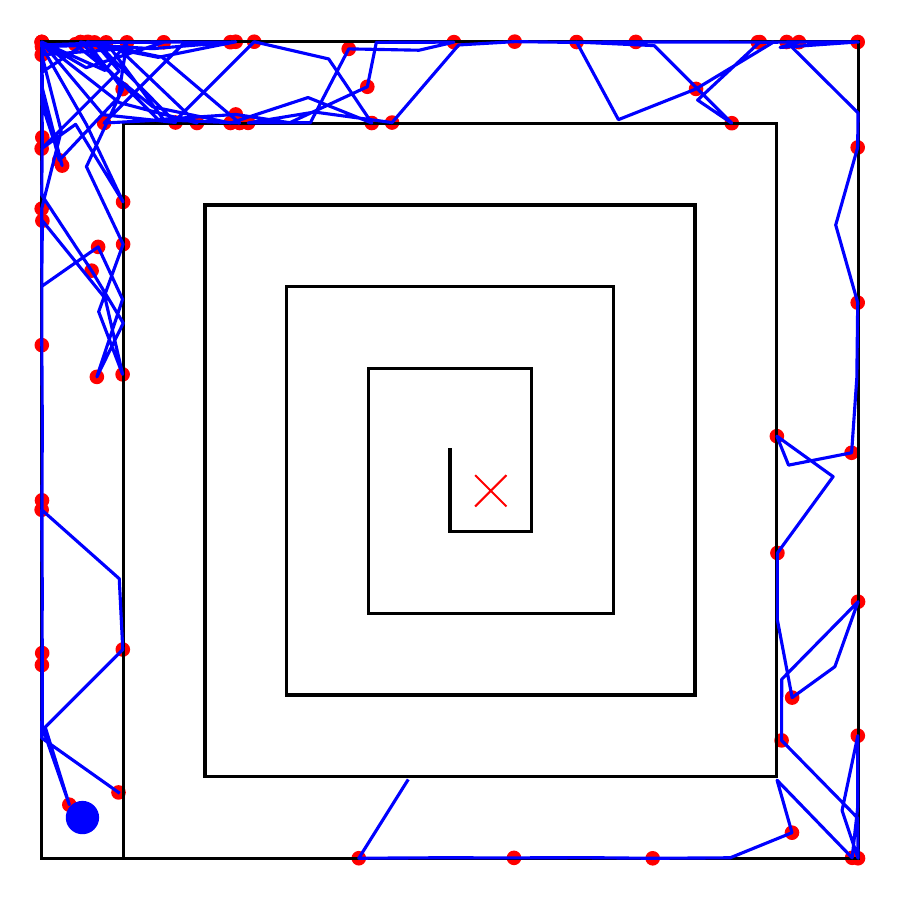} & \includegraphics[width=\linewidth]{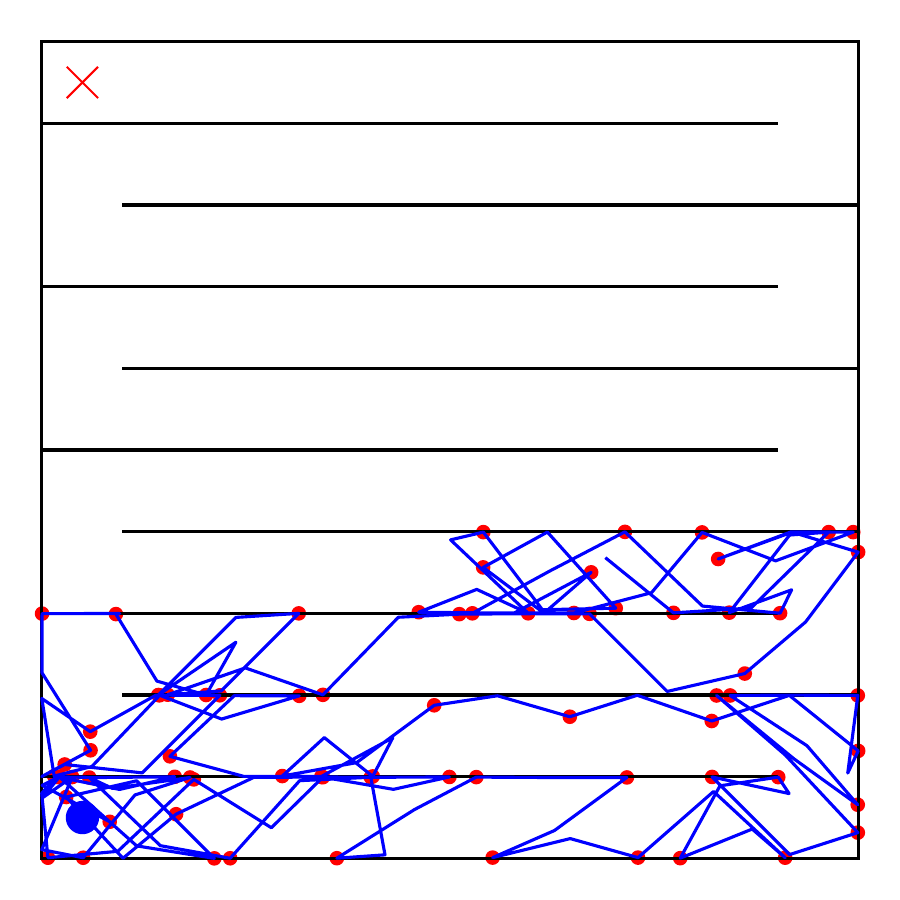} \\
    \end{tabular}
    \caption{Visualized trajectories of selected policies with maximum occupancy ratios (Max Occ) in the \texttt{PointMaze} task series over 50 evaluation episodes.}
    \label{fig:point_test_vis}
\end{table}

\section{Additional Ablation Study}
In this section, we conduct two supplementary ablation studies. The first study assesses the benefits of the GEASD-L approach compared to GEASD-H, specifically highlighting the advantages of utilizing all available data in the replay buffer for learning skill value functions, as opposed to exclusively using skill execution data.
The second study investigates the impact of varying the sizes of the contextual horizon \(C\) on the learning efficiency, while maintaining a consistent skill horizon \(k=2\) on the \texttt{PointMaze-Spiral} environment.

\subsection{Learning Data for Skill Value Functions}
As discussed in Section~\ref{sect:structural_representation}, GEASD-H and GEASD-L exhibit differences not only in their methods of calculating target values but also in the range of training data utilized for skill value functions. Specifically, GEASD-L is capable of leveraging data collected by goal-conditioned policies via importance sampling, in addition to skill execution data.

This section investigates the benefits to GEASD-L when it incorporates additional non-skill execution data in learning the skill value functions. This issue is closely tied to whether the extra data can help the agent learn accurate skill value functions more swiftly. To facilitate this investigation, we introduce a variant of GEASD-L that solely relies on skill execution data for learning skill value functions. To clearly differentiate between the variants trained on varying data scopes, we label each method according to the data used for skill value functions. The original GEASD-L, as referenced in the main text, is denoted as GEASD-L(all data), while the new variant focused on skill execution data is labeled as GEASD-L(skill data).

As shown in Fig.~\ref{fig:ablation_study_data}, GEASD-L(skill data) degrades significantly compared to GEASD-L(all data). In \texttt{PointMaze-Spiral}, GEASD-L(skill data) reaches 100\% success 17\% and 33\% slower than GEASD-H(skill data) and GEASD-L(all data). In \texttt{AntMaze-U}, the learning trend of GEASD-L(skill data) is similar to GEASD-H(all data), with its average success being capped at around 60\% at the end of training. In contrast, the GEASD-L(all data) can reach 100\% at the end of training. Therefore, it demonstrates the capability of GEASD-L to leverage all data via importance sampling actually improve the overall learning efficiency, which can result from the facilitation in learning the skill value functions with more data sources. It is alos woth noting the performance of GEASD-L(skill data) is even worse than GEASD-H(skill data) in \texttt{PointMaze-Spiral}. The reason can be that the GEASD-L suffers from inaccurate value estimations as the exponential discount factor beyond $k$ steps is still $0.25$ in our cases, bootstapping the future inaccurate skill value fucntions in the  target values. In contrast, GEASD-H only calculates the accumulation of rewards over the next $k$ step as the target values, which avoids the inherent inaccurate evaluations.

\begin{figure}[ht]
    \centering
    \includegraphics[width=0.9\textwidth]{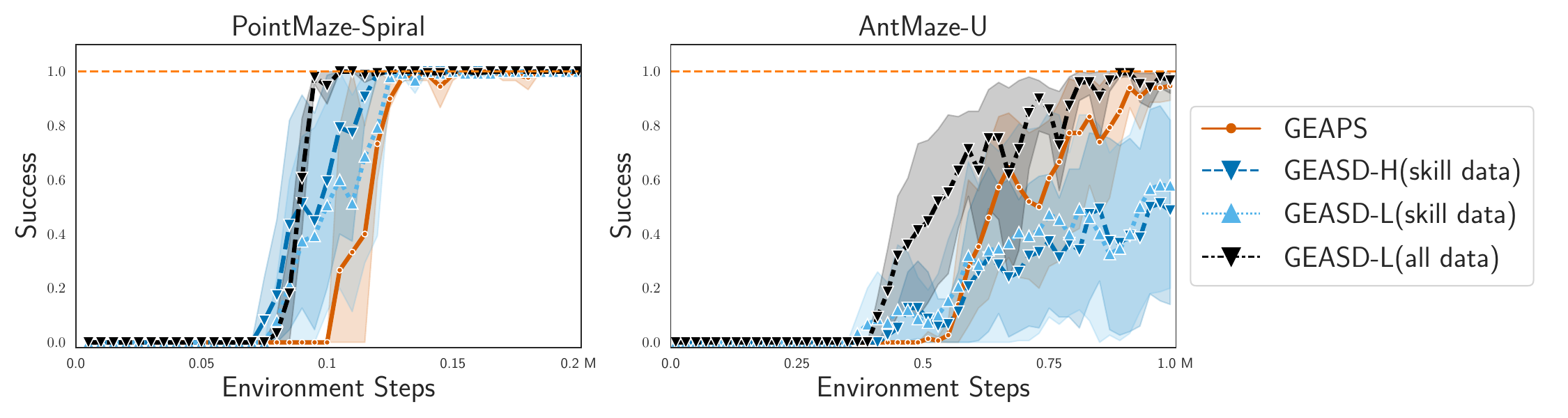}
    \caption{A comparative ablation study on achieving desired goal distribution: analyzing the impact of training data of skill value functions throughout training.}
    \label{fig:ablation_study_data}
\end{figure}

\subsection{The Contextual Horizon}
In this section, we explore the influence of the contextual horizon on learning efficiency. The contextual horizon defines the extent of historical information the agent considers and the scope of local entropy it aims to optimize. We specifically assess the performance of our most effective GEASD-L variant at different contextual horizons—$3$, $5$, and $10$—within the \texttt{PointMaze-Spiral} environment. Each variant is distinguished by labeling it with its contextual horizon enclosed in brackets; for instance, the configuration used in the main text for GEASD-L on \texttt{PointMaze-Spiral} is labeled as GEASD-L($C=10$).

As depicted in Fig.~\ref{fig:ablation_study_contextual_horizon}, GEASD-L($C=10$) outperforms the other variants. We observe a decline in learning efficiency as the contextual horizon decreases from $10$ to $3$. Notably, GEASD-L($C=10$) achieves 100\% success across random seeds 12\% and 25\% faster than the variants with $C=5$ and $C=3$, respectively. Particularly, at a contextual horizon of $C=3$, which is quite close to the skill horizon $k=2$, the learning trend is similar to that observed with GEAPS. This similarity can be explained by the optimization objective of GEASD-L $H(\bm{\Phi}(h^C_{t+k})|h^C_t)$ converging towards the optimization objective of GEAPS $H(\bm{\Phi}(h^C_{t+k}))$ as $C$ nears $k$.

\begin{figure}[h]
    \centering
    \includegraphics[width=0.56\textwidth]{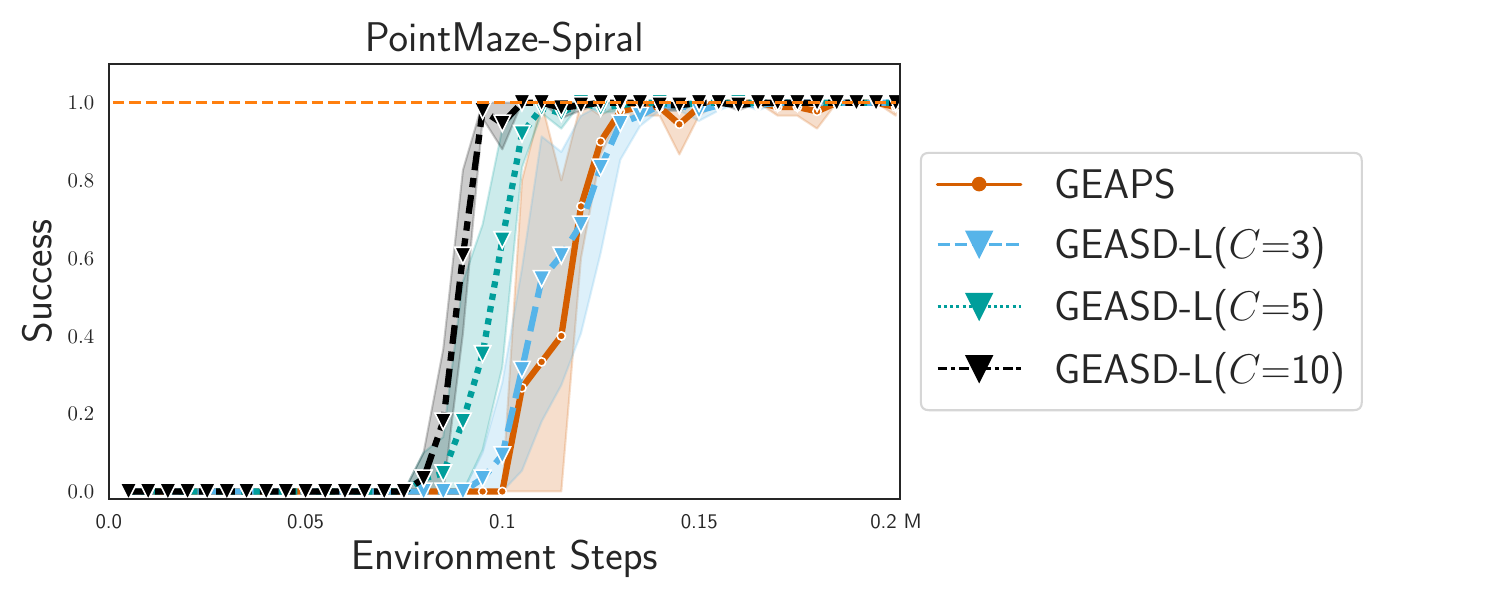}
    \caption{Varying Contextual Horizons ($C$)}
    \label{fig:ablation_study_contextual_horizon}
\end{figure}

\bibliography{ref}
\bibliographystyle{plainnat}

\end{document}